%% file: 0-main.tex
\documentclass{article}
\PassOptionsToPackage{numbers,sort&compress}{natbib}
\usepackage[preprint]{neurips_2026}
\input{preamble}

\title{Conversational Generalized Linear Bandits with Hybrid Feedback}

\author{%
{\bfseries
Yurui Zou$^{1}$ \quad
Qirun Zeng$^{2}$ \quad
Jinhang Zuo$^{2}$}\\[0.7em]
{\small
$^{1}$University of Science and Technology of China, Hefei, China} \\[-0.05em]
{\small
$^{2}$City University of Hong Kong, Hong Kong SAR, China} \\[0.4em]
{
\small
\texttt{zyurui@mail.ustc.edu.cn}
} \\[-0.05em]
{
\small
\texttt{qirun.zeng@my.cityu.edu.hk}
} \\[-0.05em]
{\small
\texttt{jinhang.zuo@cityu.edu.hk}}
}

\begin{document}
\maketitle

\begin{abstract}
Conversational bandits improve recommendations by actively querying users about key-terms to elicit their item preferences. However, existing methods typically assume homogeneous feedback for items and key-terms, using either absolute rewards or pairwise comparisons. In practice, item feedback often takes the form of clicks or ratings, while comparisons are more natural for key-terms. To address this gap, we study conversational generalized linear bandits with hybrid feedback, combining absolute item rewards and relative key-term comparisons. We propose Conversational UCB with Hybrid Feedback (HyConUCB), which constructs unified confidence bounds from heterogeneous feedback and adaptively selects key-term comparisons by balancing information gain and directional coverage. Through a refined regret analysis that jointly exploits both feedback sources, we establish a regret bound that improves upon reward-only and adapted dueling-only benchmarks. Experiments on synthetic and real-world datasets validate the effectiveness of our algorithm.

\end{abstract}

\input{1-intro}

\input{2-problem}

\input{3-confidence-set}

\input{4-algorithm}

\input{5-regret-analysis}

\input{6-reverse-feedback}
\input{7-syn-experiments}

\input{8-real-experiments}

\input{9-conclusion}

\label{page:main-end}

\clearpage
\bibliographystyle{unsrtnat}
\bibliography{refs}

\clearpage
\appendix
\input{appendix/1-confidence-set}
\input{appendix/2-regret-analysis-theorem}
\input{appendix/3-regret-analysis-lemma}
\input{appendix/4-additional-discussions}

\end{document}

%% file: preamble.tex
\usepackage[utf8]{inputenc}
\usepackage[T1]{fontenc}
\usepackage{amsmath,amssymb,mathtools,amsthm}
\usepackage{graphicx,xcolor}
\usepackage{algorithm,algorithmic}
\usepackage{float}
\usepackage{textcomp,bm}
\usepackage{booktabs,subcaption}
\usepackage{microtype}
\usepackage{url}
\usepackage[hidelinks]{hyperref}

\usepackage{cleveref}
\theoremstyle{plain}
\newtheorem{theorem}{Theorem}[section]
\newtheorem{lemma}[theorem]{Lemma}

\AddToHook{env/theorem/begin}{\crefalias{section}{theorem}\crefalias{theorem}{theorem}}
\AddToHook{env/lemma/begin}{\crefalias{section}{lemma}\crefalias{theorem}{lemma}}
\AddToHook{env/assumption/begin}{\crefalias{assumption}{assumption}}

\DeclareMathOperator{\tr}{tr}

\DeclareMathOperator*{\argmax}{arg\,max}
\DeclareMathOperator*{\argmin}{arg\,min}

\newcommand{\ts}{\bm{\theta}^\star}

\hypersetup{
  pdftitle={Conversational Generalized Linear Bandits with Hybrid Feedback},
  pdfauthor={Qirun Zeng, Eric He, Richard Hoffmann, Xuchuang Wang, Jinhang Zuo},
  pdfsubject={Conversational generalized linear bandits with hybrid feedback},
  pdfkeywords={Conversational Bandits, Hybrid Feedback}
}

%% file: 1-intro.tex
\section{Introduction}
\label{sec:intro}
Conversational recommender systems (CRSs) improve recommendation quality by actively interacting with users to elicit their preferences. Unlike traditional recommender systems that primarily learn from feedback on recommended items, CRSs can proactively query users about key-terms associated with groups of items, such as movie genres or news categories. These additional interactions help alleviate the cold-start problem by enabling more efficient preference learning. To formalize this process, \citet{zhang2020conversational} introduced conversational contextual bandits, where the learner sequentially recommends items (i.e., arms) and observes user feedback in the form of rewards. In addition, the learner can occasionally initiate conversations by querying users about selected key-terms and collecting additional feedback. By jointly leveraging item-level and key-term-level feedback, conversational bandits can learn user preferences more efficiently and improve recommendation performance.

Existing conversational bandit methods primarily consider two types of user feedback: absolute rewards and pairwise comparisons. Early studies on conversational contextual bandits \citep{zhang2020conversational} assume that users provide absolute feedback, such as clicks or ratings, for both recommended items and queried key-terms. More recently, \citet{yang2024conversational} introduced conversational dueling bandits, where user preferences are elicited through pairwise comparisons of items and key-terms. While both feedback types provide valuable preference information, assuming a homogeneous feedback form may not fully reflect practical recommendation scenarios. In particular, users can readily provide absolute feedback on recommended items, whereas pairwise comparisons are often more natural for abstract key-terms. This motivates the study of conversational bandits with hybrid feedback, combining absolute item rewards and relative key-term comparisons.

For example, in a movie recommender system illustrated in Fig.~\ref{fig:illustrative_figure}, users may click on or rate recommended movies, providing absolute feedback on items. Meanwhile, the system can occasionally ask users to compare movie genres, such as “Do you prefer comedies or horror movies?”, obtaining relative feedback on key-terms. Combining these two feedback types allows the system to learn user preferences more effectively without requiring pairwise comparisons for every recommendation. However, integrating heterogeneous feedback into conversational bandits with theoretical guarantees remains challenging, as absolute rewards and pairwise comparisons follow different statistical observation models. Existing work has only partially addressed this problem \citep{xie2021comparison}.
\begin{figure}[H]
  \centering
  \includegraphics[width=0.85\linewidth]{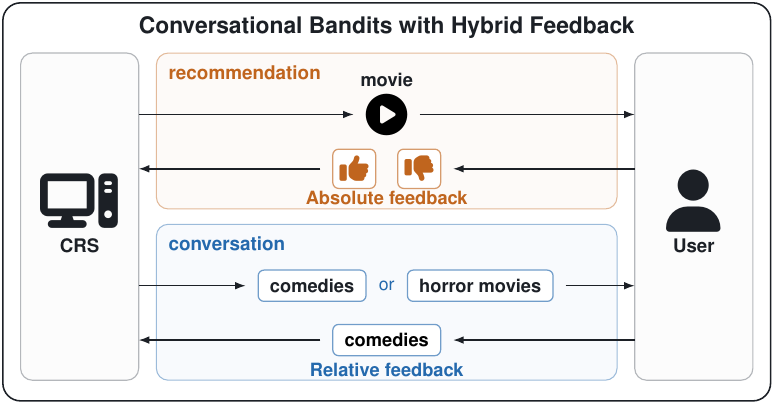}
  \caption{An illustration of conversational bandits with hybrid feedback in a movie recommender system. Users provide absolute feedback on recommended movies through likes or dislikes. The system also occasionally asks users to compare movie genres, such as comedies and horror movies, to elicit relative preferences.}
  \label{fig:illustrative_figure}
\end{figure}

\begin{table}[t]
\centering
\caption{Regret bound comparison between HyConUCB and
homogeneous-feedback benchmarks.}
\label{tab:regret_comparison}
\small
\renewcommand{\arraystretch}{1.15}
\begin{tabular*}{\textwidth}{@{\extracolsep{\fill}}lcc@{}}
\toprule
Algorithm & Leading term & Lower-order term \\
\midrule
Reward-only~\citep{lee2024a}
& $\tilde O\left(d\sqrt{T/\kappa_\star}\right)$
& $\tilde O(\kappa d^2)$ \\
ConDuel (adapted)~\citep{yang2024conversational}
& $\tilde O\left(\kappa d\sqrt{T/\kappa_\star}\right)$
& $\tilde O(\kappa^2d^2)$ \\
HyConUCB (ours)
& $\tilde O\left(
    d\sqrt{\frac{T/\kappa_\star}{1+\eta \bar v^{-1}}}
\right)$
& $\tilde O\left({d^2}/(\kappa^{-1}+\eta)\right)$ \\
\bottomrule
\end{tabular*}
\par\smallskip
\begin{minipage}{\textwidth}
\footnotesize
We consider $T$ rounds with $b(t)=\lfloor\rho t\rfloor$ cumulative conversational queries, where $\rho$ controls the query frequency. We report the case $r=d$, where $d$ is the feature dimension and $r$ is the dimension spanned by conversational queries (see Eq.~\eqref{eq:rate} for general $r$). Here, $\kappa^{-1}$ and $\kappa_\star^{-1}$ denote the worst-case and average optimal-arm curvatures, respectively; $\bar v$ is the maximum reward curvature; and $\eta=\sigma\rho d/4$ measures conversational information. The ConDuel bound is obtained by adapting its analysis to our setting. $\tilde O$ hides logarithmic factors and fixed constants.

\end{minipage}
\end{table}

Motivated by these observations, we study conversational generalized linear bandits with hybrid feedback, focusing on absolute arm rewards and relative key-term comparisons~\footnote{Our framework also accommodates the reverse configuration,
with relative arm feedback and absolute key-term feedback;
the corresponding results are presented in
Section~\ref{sec:reverse_feedback}.}. We adopt generalized linear models (GLMs) to capture diverse user responses within a unified framework. To overcome the limitations of existing conversational dueling bandits \citep{yang2024conversational}, whose confidence bounds depend on the global worst-case curvature, we extend the confidence sequence construction of \citet{lee2024a} to conversational bandits setting, handling heterogeneous arm and key-term feedback without this dependence in the confidence radius. Building on these confidence sequences, we propose Conversational UCB with Hybrid Feedback (HyConUCB), which selects arms using upper confidence bounds and adaptively queries key-term pairs by balancing information gain and directional coverage. Through a refined regret analysis that jointly exploits information from both feedback sources, we establish improved regret guarantees over reward-only and adapted dueling-only benchmarks, as summarized in Table~\ref{tab:regret_comparison}. Our analysis explicitly quantifies the benefits of conversational feedback. We also demonstrate the effectiveness of HyConUCB through experiments on synthetic and real-world datasets.

Our main contributions are summarized as follows:
\begin{itemize}
    \item We formulate \textbf{conversational generalized linear bandits (GLBs) with hybrid feedback}, integrating absolute arm rewards and relative key-term comparisons under a unified generalized linear model. This framework captures heterogeneous user feedback commonly encountered in conversational recommendation systems.

    \item We develop \textbf{HyConUCB}, which constructs time-uniform confidence sequences from heterogeneous feedback without relying on global worst-case curvature scaling. The algorithm combines UCB-based arm selection with an adaptive key-term querying strategy that balances information gain and directional coverage.

    \item We establish \textbf{improved regret guarantees} through a refined analysis that jointly exploits information from both feedback sources. In particular, we develop a new elliptical potential bound that incorporates conversational information despite the nonlinear curvature of GLMs, yielding improved regret bounds over reward-only and adapted dueling-only benchmarks.

    \item We conduct extensive experiments on \textbf{synthetic data and two real-world datasets}, Yelp and xRouteBench, covering restaurant recommendation and LLM routing. The results demonstrate the effectiveness of HyConUCB in reducing cumulative regret and improving preference estimation, highlighting the benefits of hybrid feedback across different settings.
\end{itemize}

\section{Related Work} 
Our work builds on several research areas, and we review some recent work in the most related areas.

\noindent\textbf{Conversational Bandits.} 
\citet{zhang2020conversational} introduced ConUCB, which augments item feedback with key-term queries to accelerate preference learning. 
Building on this framework, \citet{xie2021comparison} proposed RelativeConUCB to incorporate relative key-term feedback. Although its feedback configuration is similar to ours, RelativeConUCB is a simple empirical extension of ConUCB~\citep{zhang2020conversational} and does not establish regret guarantees.
\citet{wang2023efficient} proposed ConLinUCB, jointly estimating user preferences from arm-level and key-term-level feedback. 
Based on this framework, \citet{yang2024conversational} studied conversational dueling GLBs and \citet{liu2025leveraging} developed more effective query strategies for linear conversational bandits.
However, the former's regret analysis yields a suboptimal bound in the GLM setting and the latter's techniques cannot be directly extended to GLBs because of the nonlinear terms introduced by GLMs. We therefore develop new techniques for regret analysis in conversational GLBs. To the best of our knowledge, we provide the first regret guarantees for conversational bandits with hybrid feedback.

\noindent\textbf{Generalized Linear Bandits.} In a canonical generalized linear model, an exponential-family response has natural parameter $\bm{x}^\top\ts$ and mean $\mu(\bm{x}^\top\ts)$, where $\ts$ is unknown~\citep{mccullagh2019generalized}. \citet{filippi2010parametric} introduced GLM-UCB for generalized linear bandits and established regret guarantees. Subsequent work has improved confidence estimation and regret analysis in GLBs, including through the use of self-concordance~\citep{russac2021self}. In particular, \citet{lee2024a} developed a unified likelihood-ratio confidence sequence for GLMs, with convex confidence regions and an ellipsoidal relaxation under self-concordance. We adapt this confidence construction to the joint likelihood of absolute arm rewards and relative key-term feedback.

\noindent\textbf{Relative and Hybrid Feedback.} Relative feedback has been widely studied in bandit learning and preference-based learning~\citep{yue2011beat,yue2012k,sui2018advancements}. 
Relative key-term feedback has also been studied in conversational recommendation~\citep{xie2021comparison,yang2024conversational}. 
More recently, \citet{wang2025fusing} studied stochastic bandits that combine reward and dueling feedback. \citet{he2026learning} considered hybrid bandits with heterogeneous offline and online feedback, and 
\citet{zeng2026best} took a step further by studying best-arm identification in GLBs with hybrid feedback, where a shared latent parameter $\ts$ links absolute rewards and relative comparisons. 
We adopt this shared-parameter formulation to represent user preferences, while focusing on regret minimization and studying hybrid feedback across the item and key-term levels: absolute rewards on items and relative comparisons between key-terms.

%% file: 2-problem.tex
\section{Problem Formulation}
\label{sec:problem}
\subsection{Generalized Linear Model}
We formulate both types of feedback using GLMs, so we begin by introducing the generalized linear model (GLM). In a canonical GLM, the conditional distribution of observation $y$, given a natural parameter $u\in\mathbb{R}$, belongs to an exponential family and is specified by
\begin{equation*}
 p(y\mid u)=\exp\left\{\frac{yu-m(u)}{a}+c(y)\right\}.
\end{equation*}
Here $m$ is the log-partition function, $a$ is the known dispersion factor and $c$ is the log base measure. 
We use $p(y\mid u)$ to denote the conditional probability density or mass function of $y$ given $u$.
A fundamental property of GLMs is that the conditional mean of $y$ is given by
\begin{equation*}
    \mathbb E[y\mid u]=m'(u)\eqqcolon \mu(u),
\end{equation*}
where $\mu(\cdot)$ is the mean function (inverse link function). For convenience, we define $\nu(u) \coloneqq \frac{\dot\mu(u)}{a}$.

\subsection{Conversational Generalized Linear Bandits with Hybrid Feedback}
We consider a sequential decision-making problem where the learner interacts with a user over $T$ rounds.
Let $\mathcal{A}$ denote a set of $K$ arms and $\mathcal{K}$ denote a set of $P$ key-terms. The relationship between the arms and the key-terms can be characterized by a weighted bipartite graph $(\mathcal{A}, \mathcal{K}, \bm{W})$, whose nodes are divided into two sets $\mathcal{A}$ and $\mathcal{K}$, and
weighted edges are represented by the matrix $\bm{W}\eqqcolon[W_{a,k}]_{K\times P}$ with $W_{a,k}$ representing the relationship between arm $a$ and key-term $k$. 

At round $t$, the learner observes a candidate set $\mathcal{A}_t\subseteq\mathcal{A}$. It then selects an arm $a_t\in\mathcal{A}_t$ and receives an absolute reward $r_t$ from the user. Each arm is associated with a $d$-dimensional feature vector $\bm{x}_{a}$. The corresponding feature vector pool at round $t$ is $\mathcal{X}_t$. We write $\bm{x}_t=\bm{x}_{a_t}$ for the feature vector of the selected arm. Let $\|\bm{x}_{a}\|_2\leq 1$ for every arm feature. The user's preference is represented by a vector $\ts\in\Theta$, where $\Theta\coloneqq\{\bm{\theta}\in\mathbb R^d:\|\bm{\theta}\|_2\le S\}$. Then the reward is defined as 
\begin{equation}\label{eq:reward}
r_t \sim p_{\mathrm{c}}(\cdot | \bm{x}_t^\top \ts).    
\end{equation}

The learner obtains additional user feedback through key-term conversations. We set the conversational feedback to be relative, that is, the learner selects a pair of key-terms and receives the dueling result of them.

We define the key-term feature vector as
\[
\tilde {\bm{x}}_{k}
\coloneqq \frac{\sum_{a\in\mathcal A}W_{a,k}\bm{x}_{a}}
{\sum_{a\in\mathcal A}W_{a,k}}.
\]
A nondecreasing integer-valued function $b(t)$, with $b(0)=0$, is defined to model the frequency of conversations. Before selecting an arm, the learner conducts $q_t=b(t)-b(t-1)$ conversations. For each $i\in\{1,2,...,q_t\}$, the learner selects a key-term pair $(k_{t,i},k'_{t,i})\in\mathcal{K}^2$. Define its comparison feature vector as
\begin{equation*}
    \tilde {\bm{d}}_{t,i} \coloneqq \tilde {\bm{x}}_{k_{t,i}}-\tilde {\bm{x}}_{k'_{t,i}}.
\end{equation*}
Thus $\|\tilde {\bm{d}}_{t,i}\|_2\le2$, and the learner observes the dueling feedback $\tilde o_{t,i}\in\{0,1\}$ with distribution
\begin{equation}\label{eq:dueling_feedback}
    \tilde o_{t,i} \sim p_{\rm d} \left(\cdot\mid\tilde {\bm{d}}_{t,i}^\top\ts\right),
\end{equation}
where $\tilde o_{t,i}=1$ indicates preference for $k_{t,i}$. 

Both arm reward and conversational dueling feedback are modeled as GLMs with a shared parameter $\ts$, but possibly different mean functions $\mu_{\rm c}$ and $\mu_{\rm d}$. We use subscripts $\rm c$ and $\rm d$ to distinguish reward and dueling quantities. We define the feedback modality to be $\rm c$ for absolute feedback and $\rm d$ for relative feedback. 

Let the query pool $\mathcal Q \coloneqq \{\bm{q}_e\}_{e=1}^{N_q}$ contain all pairwise differences $\tilde{\bm{x}}_k-\tilde{\bm{x}}_{k'}$ from the key-term pool, and let $U \coloneqq \operatorname{span} (\mathcal Q)$ and $r = \dim(U)$. Unlike prior works that assume full-dimensional conversational coverage ($r=d$)~\citep{yang2024conversational,wang2023efficient}, our algorithm also applies when conversations cover only a subspace ($r<d$).

Under the conversational GLBs with hybrid feedback problem described above, our objective is to minimize the cumulative regret in $T$ rounds. Let $\bm{x}_{t,\star}=\arg\max_{\bm{x}\in\mathcal X_t}\bm{x}^\top \ts$. Following the definition in \citet{filippi2010parametric}, we define the regret as
\[
R(T)=\sum_{t=1}^T\left\{
\mu_{\mathrm{c}}((\bm{x}_{t,\star})^\top \ts)-\mu_{\mathrm{c}}(\bm{x}_t^\top \ts)\right\}.
\]

Following \citet{lee2024a,zeng2026best}, we assume that the log-partition functions $m_{\rm c},m_{\rm d}$ are three times continuously differentiable, with $\dot \mu_{\rm c},\dot\mu_{\rm d}>0$. For modality $j\in\{\mathrm{c},\mathrm{d}\}$, we use a self-concordance bound $R_s$ and curvature bound $R_{\dot \mu}$:
\begin{equation*}
\begin{aligned}
 |\ddot\mu_j(u)|&\le R_s\dot\mu_j(u),\qquad R_{\dot\mu}=\max_{j\in\{\mathrm{c},\mathrm{d}\}}\sup_u\dot\mu_j(u).
\end{aligned}
\end{equation*}

%% file: 3-confidence-set.tex
\section{Hybrid Conversational UCB Algorithm}
\label{sec:algorithm}

\subsection{Confidence Sequences}
\label{sec:confidence}

We construct confidence sequences for the unknown user preference vector $\ts$. Following the procedure in \citet{zeng2026best} and \citet{lee2024a}, we first give a likelihood-ratio–based confidence sequence. Then we relax it to ellipsoidal confidence sequence which is more suitable for our algorithm design and analysis.

Define the cumulative negative log-likelihood function at round $t$ as
\begin{equation}\label{eq:joint_loss}
\begin{aligned}
\mathcal L_t(\bm{\theta})
={}&\sum_{s<t}\frac{-r_s\bm{x}_s^\top\bm{\theta}+m_{\mathrm{c}}(\bm{x}_s^\top\bm{\theta})}{a_{\mathrm{c}}} \\
&+ \sum_{s\le t}\sum_{i=1}^{q_s} 
\frac{-\tilde o_{s,i}\tilde {\bm{d}}_{s,i}^\top\bm{\theta}
+m_{\mathrm{d}}(\tilde {\bm{d}}_{s,i}^\top\bm{\theta})}{a_{\mathrm{d}}}.
\end{aligned}
\end{equation}
Following \citet{lee2024a}, we define $\hat {\bm{\theta}}_t \in \arg\min_{\bm{\theta} \in \Theta} \mathcal L_t(\bm{\theta})$ and
\begin{equation*}
\mathcal C_t \coloneqq \{\bm{\theta}\in\Theta:
\mathcal L_t(\bm{\theta})-\mathcal L_t(\hat{\bm{\theta}}_t)\le\beta_t^2\},
\end{equation*}
where 
\[
\begin{aligned}
\beta_t^2 \coloneqq{}&\log \frac2\delta + \inf_{c \in(0,1]}\left\{d \log \frac1c + 2 S L_t c \right\}\\
\le{}& \log\frac2\delta+ d \log \left(e \vee \frac{2 e S L_t}{d} \right).
\end{aligned}
\]
and $L_t$ is the Lipschitz constant of $\mathcal L_t$ on $\Theta$.

\begin{lemma}\label{lem:confidence}
With probability at least $1-\frac\delta2$, we have $\ts \in \mathcal C_t$ for all $t\le T$.
\end{lemma}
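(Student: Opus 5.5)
We follow the likelihood-ratio construction of \citet{lee2024a}, as adapted to hybrid feedback in \citet{zeng2026best}. The plan is to (i) build a nonnegative supermartingale from a mixture of likelihood ratios over the joint observation sequence, (ii) apply Ville's inequality, and (iii) relate the mixture to $\mathcal L_t(\hat{\bm{\theta}}_t)$ through the Lipschitz constant $L_t$. The key structural point is the ordering of the observations. In round $s$ the $q_s$ key-term duels come first and the arm reward comes after. We index all observations by a single filtration $\mathcal F$, in which each observation's feature vector ($\tilde{\bm d}_{s,i}$ or $\bm x_s$) is measurable with respect to the past. The loss $\mathcal L_t$ in \eqref{eq:joint_loss} is then exactly the negative log-likelihood of all observations revealed before the arm reward at round $t$: the rewards for $s<t$ and the duels for $s\le t$. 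We therefore work with a per-observation time index.

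\textbf{Step 1 (supermartingale).} Fix a prior $\pi$ on $\Theta$ and set
\[
M_t = \int_\Theta \exp\bigl(\mathcal L_t(\ts)-\mathcal L_t(\bm\theta)\bigr)\,d\pi(\bm\theta).
\]
Each factor is $p_j(y\mid \bm z^\top\bm\theta)/p_j(y\mid \bm z^\top\ts)$ with $j\in\{\mathrm c,\mathrm d\}$. Its conditional expectation under the true model, given the past, is $1$; this holds separately for reward and dueling observations because each is an exact density ratio in its own exponential family. By Fubini, $M_t$ is therefore a nonnegative martingale with $M_0=1$, taken over the interleaved observation sequence. Ville's inequality then gives $\Pr(\exists t: M_t\ge 2/\delta)\le\delta/2$.

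\textbf{Step 2 (lower-bounding the mixture).} On the complement event,
\[
\mathcal L_t(\ts)\le \log\tfrac{2}{\delta}-\log\int e^{-\mathcal L_t(\bm\theta)}\,d\pi(\bm\theta)
\]
for all $t$. We take $\pi$ uniform on the shrunken ball $\hat{\bm\theta}_t\cdot(1-c)+c\,\Theta$, or equivalently use the standard trick of \citet{lee2024a}: a uniform prior on $\Theta$ combined with the convexity of $\Theta$. Every point of this set lies within $2Sc$ of $\hat{\bm\theta}_t$, so the Lipschitz property gives $\mathcal L_t(\bm\theta)\le\mathcal L_t(\hat{\bm\theta}_t)+2SL_tc$ there. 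The set has $\pi$-mass $c^d$. Combining, $\mathcal L_t(\ts)-\mathcal L_t(\hat{\bm\theta}_t)\le \log\frac2\delta+d\log\frac1c+2SL_tc$, and taking the infimum over $c$ yields $\beta_t^2$.

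The subtle point is that the prior must not depend on $\hat{\bm\theta}_t$, which is data-dependent. We handle this as in \citet{lee2024a}. Take $\pi$ uniform on $\Theta$. For any $\bm\theta\in\Theta$, the set $\{(1-c)\hat{\bm\theta}_t+c\bm\theta'\colon \bm\theta'\in\Theta\}$ is contained in $\Theta$ by convexity, lies within $2Sc$ of $\hat{\bm\theta}_t$, and has volume $c^d\,\mathrm{vol}(\Theta)$. This gives the mass bound uniformly in $t$ without any data-dependent prior.

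\textbf{Step 3 (the displayed upper bound).} We choose $c=\min\{1,d/(2SL_t)\}$. This gives $d\log\frac1c+2SL_tc\le d\log(e\vee 2eSL_t/d)$, which is elementary. I expect the main obstacle to be Step 1: stating the filtration correctly so that the dueling observations of round $t$ are included while $r_t$ is excluded, and checking that measurability of $\tilde{\bm d}_{s,i}$ and $\bm x_s$ holds under the algorithm's adaptive query selection. The remaining steps are direct transcriptions of \citet{lee2024a}.
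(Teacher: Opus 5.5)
Your proposal is correct and follows the paper's approach. The paper likewise lists the arm rewards and key-term duels in their actual order of observation, so that $\mathcal L_t$ has $N_t=t-1+b(t)$ terms with the round-$t$ duels included and $r_t$ excluded, and then invokes \citet[Lemma~3.1]{zeng2026best}, whose proof is the mixture-martingale, Ville, and Lipschitz/volume argument you write out. One thing to clean up is the remark that a prior uniform on $(1-c)\hat{\bm{\theta}}_t+c\Theta$ is ``equivalent'': that prior is data-dependent and not admissible, and what makes the argument valid is the fixed uniform prior on $\Theta$, with the shrunken set used only to lower-bound the integral, as you go on to note.
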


\begin{proof}[Proof Sketch]
To prove it, we index the arm rewards and key-term comparisons in their actual order of observation. Before arm selection in round $t$, there are $N_t=t-1+b(t)$ observations. Write the resulting sequence as $(j_n,\bm{z}_n,y_n)_{n=1}^{N_t}$, where $j_n\in\{\mathrm{c},\mathrm{d}\}$ denotes the modality and $(\bm{z}_n,y_n)$ denotes $(\bm{x}_s,r_s)$ for an arm reward or $(\tilde{\bm{d}}_{s,i},\tilde o_{s,i})$ for a key-term comparison. Eventually, we have
\[
\mathcal L_t(\bm{\theta})
=\sum_{n=1}^{N_t}
\frac{-y_n\bm{z}_n^\top\bm{\theta}+m_{j_n}(\bm{z}_n^\top\bm{\theta})}{a_{j_n}}.
\]
This function has the same form as the cumulative negative log-likelihood function in \citet{zeng2026best}. Applying \citet[Lemma~3.1]{zeng2026best} gives \Cref{lem:confidence}.
\end{proof}

Let $\lambda>0$. The regularized joint Hessian at
$\hat{\bm{\theta}}_t$ is
\begin{equation}\label{eq:joint_hessian}
\begin{aligned}
\bm{H}_t&=\lambda \bm{I}_d+\nabla^2\mathcal L_t(\hat{\bm{\theta}}_t)\\
&=\lambda \bm{I}_d+
\sum_{s<t}\nu_{\mathrm{c}}(\bm{x}_s^\top\hat{\bm{\theta}}_t)\bm{x}_s\bm{x}_s^\top+\sum_{s\le t}\sum_{i=1}^{q_s}
\nu_{\mathrm{d}}(\tilde {\bm{d}}_{s,i}^\top\hat{\bm{\theta}}_t)
\tilde {\bm{d}}_{s,i}\tilde {\bm{d}}_{s,i}^\top.
\end{aligned}
\end{equation}
Following the ellipsoidal construction of
\citet[Theorem~3.2]{lee2024a}, let $\gamma_t^2 \coloneqq 2(1+2SR_s)(4\lambda S^2+\beta_t^2)$, we define
\begin{equation*}
\mathcal E_t \coloneqq \{\bm{\theta}\in\mathbb R^d:
\|\bm{\theta}-\hat {\bm{\theta}}_t\|_{\bm{H}_t}^2 \le \gamma_t^2\}.
\end{equation*}

\begin{lemma}\label{lem:ellipsoid}
For every $t \le T$, $\mathcal C_t \subseteq \mathcal E_t$.
Consequently, with probability at least $1-\frac\delta2$, $\ts \in \mathcal E_t$, for all $t\le T$.
\end{lemma}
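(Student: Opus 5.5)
Our plan is the self-concordance argument behind \citet[Theorem~3.2]{lee2024a}, run on the joint loss $\mathcal L_t$ of \eqref{eq:joint_loss}. As in the proof sketch of \Cref{lem:confidence}, we write $\mathcal L_t(\bm{\theta})=\sum_{n=1}^{N_t}\ell_n(\bm z_n^\top\bm{\theta})$, with $\ell_n(u)=(-y_nu+m_{j_n}(u))/a_{j_n}$. Each summand then has second derivative $\nu_{j_n}(u)=\dot\mu_{j_n}(u)/a_{j_n}$. The self-concordance assumption $|\ddot\mu_j|\le R_s\dot\mu_j$ holds for both modalities, so each $\nu_{j_n}$ satisfies the same bound regardless of modality. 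This is what lets the single-modality argument go through unchanged. Fix $\bm{\theta}\in\mathcal C_t\subseteq\Theta$; we must show $\|\bm{\theta}-\hat{\bm{\theta}}_t\|_{\bm H_t}^2\le\gamma_t^2$. The second claim of the lemma then follows immediately from \Cref{lem:confidence}.

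\textbf{Step 1 (integral Taylor expansion).} Let $\bm\Delta=\bm{\theta}-\hat{\bm{\theta}}_t$. We write
\[
\mathcal L_t(\bm{\theta})-\mathcal L_t(\hat{\bm{\theta}}_t)=\nabla\mathcal L_t(\hat{\bm{\theta}}_t)^\top\bm\Delta+\bm\Delta^\top\tilde{\bm G}\bm\Delta,
\]
where $\tilde{\bm G}=\sum_n\tilde\alpha_n\bm z_n\bm z_n^\top$ and $\tilde\alpha_n=\int_0^1(1-v)\nu_{j_n}(\bm z_n^\top(\hat{\bm{\theta}}_t+v\bm\Delta))\,dv$. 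Since $\hat{\bm{\theta}}_t$ minimizes the convex function $\mathcal L_t$ over the convex set $\Theta$ and $\bm{\theta}\in\Theta$, first-order optimality gives $\nabla\mathcal L_t(\hat{\bm{\theta}}_t)^\top\bm\Delta\ge0$. Hence $\bm\Delta^\top\tilde{\bm G}\bm\Delta\le\beta_t^2$.

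\textbf{Step 2 (self-concordance lower bound).} From $|\ddot\mu_j|\le R_s\dot\mu_j$ we get $\nu_j(u+w)\ge\nu_j(u)e^{-R_s|w|}$. Along the segment, $|\bm z_n^\top v\bm\Delta|\le v\,\|\bm z_n\|_2\|\bm\Delta\|_2$. We use the bound $|\bm z_n^\top\bm\Delta|\le 2S\|\bm z_n\|_2$, and must watch that $\|\tilde{\bm d}\|_2\le2$ does not inflate the constant beyond $1+2SR_s$ (see the last paragraph). Then the standard integral estimate gives
\[
\int_0^1(1-v)e^{-cv}\,dv\ge\frac{1}{2+c}.
\]
This yields $\tilde\alpha_n\ge\nu_{j_n}(\bm z_n^\top\hat{\bm{\theta}}_t)/(2(1+SR_s|\cdot|))$, i.e.
\[
\tilde{\bm G}\succeq\frac{1}{2(1+2SR_s)}\bigl(\bm H_t-\lambda\bm I_d\bigr).
\]

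\textbf{Step 3 (combine).} Adding the regularizer and using $\|\bm\Delta\|_2\le2S$, we get
\[
\|\bm\Delta\|_{\bm H_t}^2\le2(1+2SR_s)\beta_t^2+4\lambda S^2\le2(1+2SR_s)(4\lambda S^2+\beta_t^2)=\gamma_t^2.
\]

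\textbf{Main obstacle.} The main obstacle is the constant in Step 2. The comparison vectors have norm up to $2$, so the naive bound $|\tilde{\bm d}^\top\bm\Delta|\le4S$ would give a factor $1+4SR_s$ rather than $1+2SR_s$. To recover the stated constant, we would first try the sharper inequality $\int_0^1(1-v)e^{-cv}dv\ge 1/(2+c)$ with $c=R_s|\bm z_n^\top\bm\Delta|$. We would then exploit that $\gamma_t^2$ carries the extra slack $4\lambda S^2\cdot2(1+2SR_s)$ and the factor $2$ in front. If that does not suffice, we would follow \citet{lee2024a,zeng2026best}: they state the bound with a generic feature-norm bound, and we would accept the constant they derive in the hybrid setting.
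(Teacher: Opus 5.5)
Your Steps 1--3 follow the paper's proof exactly: first-order optimality over the convex $\Theta$, the integral Taylor remainder, the self-concordance bound $\int_0^1(1-v)e^{-cv}\,dv\ge 1/(2+c)$ (the paper imports this as Lemma~D.1 of \citet{lee2024a}), and finally adding $\lambda\|\bm\Delta\|_2^2\le 4\lambda S^2$. The ``main obstacle'' you flag does not exist: the naive bound $c=R_s|\bm z_n^\top\bm\Delta|\le 4SR_s$ gives $2+c\le 2+4SR_s=2(1+2SR_s)$, which is exactly the stated constant and exactly how the paper obtains it, so no fallback is needed --- and absorbing a larger constant into the fixed $4\lambda S^2$ slack would fail in general anyway, since $\beta_t^2$ grows with $t$.
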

The proof is given in Appendix~\ref{app:confidence_set}.

%% file: 4-algorithm.tex
\subsection{Algorithm Design}

\begin{algorithm}[t]
\caption{HyConUCB}
\label{alg:hybrid_glm}
\begin{algorithmic}[1]
\REQUIRE The weighted bipartite graph $(\mathcal{A},\mathcal{K},\bm{W})$, conversational frequency function $b(t)$, parameter norm bound $S$
\STATE Construct the query pool $\mathcal Q$.
\FOR{$t=1,\ldots,T$}
\STATE Set $q_t\gets b(t)-b(t-1)$.
\FOR{$i=1,\ldots,q_t$}
\STATE Solve query distribution $\bm{p}_{t,i}$ according to Eq.~\eqref{eq:query_selection}.
\STATE Sample $(k_{t,i},k'_{t,i})$ according to $\bm{p}_{t,i}$.
\STATE Query $(k_{t,i},k'_{t,i})$ and observe $\tilde o_{t,i}$, and set $\mathcal F_{t-1} \gets \mathcal F_{t-1} \cup (k_{t,i}, k'_{t,i}, \tilde o_{t,i})$.
\ENDFOR
\STATE Compute $\hat {\bm{\theta}}_t \in \arg\min_{\bm{\theta} \in \Theta} \mathcal L_t(\bm{\theta})$.
\STATE Select $a_t \in \arg\max_{a \in \mathcal{A}_t} \left\{\bm{x}_{a}^\top \hat {\bm{\theta}}_t + \gamma_t\| \bm{x}_{a} \|_{\bm{H}_t^{-1}}\right\}$.
\STATE Observe $r_t$ and set $\mathcal F_t \gets \mathcal F_{t-1} \cup (a_t, r_t)$.
\ENDFOR
\end{algorithmic}
\end{algorithm}
Building on the confidence sequences above, we present the Conversational UCB with Hybrid Feedback algorithm (HyConUCB; \Cref{alg:hybrid_glm}).
The central idea of HyConUCB is to balance information gain and directional coverage in key-term selection, while employing the UCB principle for arm selection.

At each round $t$, the learner first conducts $q_t=b(t)-b(t-1)$ conversations to gather additional information about user preferences through key-term comparisons. It then updates its parameter estimate using both arm rewards and key-term comparisons, and selects an arm according to the UCB principle.

We first describe the \emph{key-term} selection rule. For information gain, we adopt the MaxInp principle~\citep{saha2021optimal}. This principle selects the key-term that maximizes the information uncertainty, i.e., \((k_{t,i},k'_{t,i})\in\arg\max_{k,k'\in\mathcal K}
\|\widetilde{\bm{x}}_k-\widetilde{\bm{x}}_{k'}\|_{\bm{M}_{t,i}^{-1}},\)
where $\bm{M}_{t,i}$ is the joint Hessian immediately before query $i$ in round $t$.

For each query $\bm{q}_e\in\mathcal{Q}$, calculate
\[
a_{t,i,e}=\|\bm{q}_e\|^2_{\bm{M}_{t,i}^{-1}}
\quad \text{and}\quad
s_{t,i,e}=\frac{a_{t,i,e}}{\max_f a_{t,i,f}}\in[0,1],
\]
where $a_{t,i,e}$ is the MaxInp score of $\bm{q}_e$.

Next we consider directional coverage. For a query distribution $\bm{p}$ on $\mathcal{Q}$, the matrix $\bm{V}(\bm{p})=\sum_e p_e\bm{q}_e\bm{q}_e^\top$ summarizes how the sampled query features cover different directions. Its eigenvalues reflect how strongly the queries explore the corresponding eigenvector directions. To encourage coverage of every direction in $U$, we use the minimum eigenvalue of this matrix restricted to $U$. Define
\[
j(\bm{p})=\lambda_{\min,U}\!\left(\bm{V}(\bm{p})\right),
\]
where $\lambda_{\min,U}$ denotes the minimum eigenvalue restricted to $U$.

Finally, we solve
\begin{equation}\label{eq:query_selection}
\bm{p}_{t,i}\in\argmax_{\bm{p}\in\Delta_{N_q}}
\left\{
\sum_e p_es_{t,i,e}
+
\xi\log j(\bm{p})
\right\},
\qquad 0<\xi<1.
\end{equation}
According to $\bm{p}_{t,i}$, we sample $\tilde {\bm{d}}_{t,i}$ from $\mathcal Q$.

The first term in Eq.~\eqref{eq:query_selection} favors queries with large MaxInp scores, while the second term encourages information coverage in the least-observed direction. The parameter $\xi$ controls this trade-off.
So the resulting query rule balances the information gain and directional coverage. It also guarantees the conditional information lower bound in Lemma~\ref{lem:information}. The proof is provided in Appendix~\ref{app:algorithm_proofs}.

After completing the conversational queries, the learner computes the updated parameter estimate $\hat{\bm{\theta}}_t$ using all feedback observed so far. Based on \Cref{lem:ellipsoid}, we define the UCB score of each arm $a\in\mathcal A_t$ as
\[
u_{t,a}\coloneqq
\bm{x}_{a}^{\top}\hat{\bm{\theta}}_t
+\gamma_t\|\bm{x}_{a}\|_{\bm{H}_t^{-1}}.
\]
The learner then selects an \emph{arm} that maximizes the UCB score, i.e.,
\[
a_t\in\argmax_{a\in\mathcal A_t}u_{t,a},
\]
breaking ties arbitrarily. Finally, the learner observes the arm reward $r_t$, which is incorporated into the parameter estimation in subsequent rounds. 

\begin{lemma}\label{lem:information}
Let
\(\mathcal F_t\coloneqq\sigma\Bigl(a_s,r_s),
\{(k_{s,i},k'_{s,i},\widetilde o_{s,i})\}_{i=1}^{q_s}
:\,1\le s\le t
\Bigr)\) denote the $\sigma$-algebra generated by the history through the end of round $t$. Under the query rule in~Eq.~\eqref{eq:query_selection}, there exists
a constant $\sigma>0$ such that, almost surely,
for every $t\in[T]$,
\begin{equation}\label{eq:fisher_condition}
\begin{gathered}
\mathbb E\!\left[
\sum_{i=1}^{q_t}
\nu_{\mathrm{d}}(\tilde {\bm{d}}_{t,i}^{\top}\ts)
\tilde {\bm{d}}_{t,i}\tilde {\bm{d}}_{t,i}^{\top}
\,\middle|\,\mathcal F_{t-1}
\right]
\succeq
\sigma q_t\bm{P}_U,
\end{gathered}
\end{equation}
where $\bm{P}_U$ is the orthogonal projection matrix onto $U$.
\end{lemma}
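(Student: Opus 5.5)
The plan is to prove the bound one query at a time and then sum over $i=1,\dots,q_t$. Fix $t$ and $i$. The distribution $\bm{p}_{t,i}$ is determined by $\bm{M}_{t,i}$, which depends only on $\mathcal F_{t-1}$ and on the earlier queries $(\tilde{\bm d}_{t,i'},\tilde o_{t,i'})_{i'<i}$ of round $t$. Conditioned on this enlarged history, $\tilde{\bm d}_{t,i}$ is drawn from $\bm p_{t,i}$. Hence
\[
\mathbb E\!\left[\nu_{\mathrm d}(\tilde{\bm d}_{t,i}^\top\ts)\tilde{\bm d}_{t,i}\tilde{\bm d}_{t,i}^\top \,\middle|\, \mathcal F_{t-1},(\tilde{\bm d}_{t,i'},\tilde o_{t,i'})_{i'<i}\right]=\sum_e p_{t,i,e}\,\nu_{\mathrm d}(\bm q_e^\top\ts)\bm q_e\bm q_e^\top .
\]
It then suffices to show that this matrix is $\succeq \sigma \bm P_U$ almost surely with a deterministic $\sigma$. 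The tower property lifts the bound to conditioning on $\mathcal F_{t-1}$, and summing over $i$ gives the factor $q_t$.

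\textbf{Curvature factor.} Since $\|\bm q_e\|_2\le 2$ and $\|\ts\|_2\le S$, we have $|\bm q_e^\top\ts|\le 2S$. Because $\dot\mu_{\mathrm d}>0$ is continuous, it follows that $\nu_{\mathrm d}(\bm q_e^\top\ts)\ge \kappa_{\mathrm d}^{-1}:=\min_{|u|\le 2S}\dot\mu_{\mathrm d}(u)/a_{\mathrm d}>0$. This gives
\[
\sum_e p_{t,i,e}\nu_{\mathrm d}(\bm q_e^\top\ts)\bm q_e\bm q_e^\top\succeq \kappa_{\mathrm d}^{-1}\bm V(\bm p_{t,i}).
\]
Every $\bm q_e$ lies in $U$, so $\bm V(\bm p)$ is supported on $U$ and $\bm V(\bm p)\succeq j(\bm p)\bm P_U$. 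It remains to lower bound $j(\bm p_{t,i})$ uniformly in $t,i$ and in the realized scores $s_{t,i,e}$.

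\textbf{Lower bound on $j(\bm p_{t,i})$ (main step).} Let $\bm p^{\mathrm D}$ be a distribution maximizing $j$ over $\Delta_{N_q}$, for instance the uniform distribution on $\mathcal Q$. Set $j^\star:=j(\bm p^{\mathrm D})$. Since $\mathcal Q$ spans $U$, $j^\star>0$. The objective in Eq.~\eqref{eq:query_selection} is $F(\bm p)=\langle \bm p,\bm s\rangle+\xi\log j(\bm p)$, and $\bm s\in[0,1]^{N_q}$ implies $\langle \bm p,\bm s\rangle\in[0,1]$. By optimality of $\bm p_{t,i}$,
\[
1+\xi\log j(\bm p_{t,i})\ \ge\ F(\bm p_{t,i})\ \ge\ F(\bm p^{\mathrm D})\ \ge\ \xi\log j^\star,
\]
so $j(\bm p_{t,i})\ge j^\star e^{-1/\xi}$. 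This bound holds for any maximizer, and such a maximizer exists because $\log j$ is concave and upper semicontinuous on the simplex. Taking $\sigma:=\kappa_{\mathrm d}^{-1}j^\star e^{-1/\xi}$, a deterministic constant, completes the proof.

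The main obstacle is the conditioning. $\bm M_{t,i}$, and hence $\bm p_{t,i}$, depends on within-round feedback that is not in $\mathcal F_{t-1}$. This is why the argument must go through the per-query filtration and the tower property. The uniform bound on $j(\bm p_{t,i})$, which does not depend on the history, is exactly what makes that step go through.
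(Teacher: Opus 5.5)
Your proof is correct. The conditioning step (per-query history, then the tower property) matches the paper's. The difference lies in how you lower-bound $j(\bm p_{t,i})$.

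\textbf{The paper's route.} It uses a first-order optimality argument. It perturbs the maximizer $\bm p^\star$ toward $\bm p^{\rm geo}\in\arg\max j$, uses concavity of $j$ and $A(\bm p)\in[0,1]$, and lets $\tau\downarrow 0$. This gives $j(\bm p^\star)\ge\frac{\xi}{1+\xi}j_{\rm geo}$.

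\textbf{Your route.} You compare objective values directly and obtain $j(\bm p_{t,i})\ge e^{-1/\xi}j^\star$. This is more elementary: it needs no concavity of $j$ and no limit, and any reference $\bm p$ with $j(\bm p)>0$ suffices. You also justify existence of a maximizer, which the paper does not.

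\textbf{The cost is a much worse constant.} Since $\log(1+1/\xi)\le 1/\xi$, the paper's factor always dominates yours. At the experimental value $\xi=0.05$ it is about $0.048$, versus $e^{-20}\approx 2\times10^{-9}$. The lemma only asserts existence of some $\sigma>0$, so this does not invalidate your proof. However, $\sigma$ drives the burn-in $n_0\propto 1/\sigma$ and the conversational gain $\eta=\sigma\rho r/4$ in \Cref{thm:regret}. An $e^{-1/\xi}$ factor would make the claimed improvement essentially vacuous for small $\xi$.

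\textbf{The curvature constant has the same issue.} You take a worst case over $|u|\le 2S$. The paper instead uses the instance-dependent $c_\star=\min_e\nu_{\mathrm d}(\bm q_e^\top\ts)$ over the finite pool. It is this choice that makes $\sigma$ independent of the intercept $B$ in Appendix~\ref{app:a_good_case}. There your constant would decay like $\dot\mu_{\mathrm d}(2S)$ with $S=\sqrt{2(B^2+C^2)}$. Since $\mathcal Q$ is finite, you could use $c_\star$ at no extra cost.

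\textbf{A small slip.} The uniform distribution on $\mathcal Q$ is not in general a maximizer of $j$. Your argument only needs $j(\bm p^{\mathrm D})>0$, so drop either ``maximizing'' or the ``for instance''.
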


%% file: 5-regret-analysis.tex
\section{Regret Analysis}

In this section, we establish a regret bound for HyConUCB. We use the notations
\begin{equation*}
\begin{aligned}
\kappa&=\max_{t\le T,\,\bm{x}\in\mathcal X_t,\,\bm{\theta}\in\Theta}
\frac1{\dot\mu_{\rm c}(\bm{x}^\top\bm{\theta})},\qquad
\kappa_\star=T/\sum_{t=1}^T
\dot\mu_{\rm c}((\bm{x}_{t,\star})^\top\ts),\\
\bar v&=\max_{t\le T,\,\bm{x}\in\mathcal X_t}\dot\mu_{\rm c}(\bm{x}^\top\ts),\qquad
R_{\mu,\star}=\sup_{t\le T,\,\bm{x}\in\mathcal X_t}|\mu_{\rm c}(\bm{x}^\top\ts)|.
\end{aligned}
\end{equation*}
For the analysis, set $\bar\gamma_T=\max_{t\le T}\gamma_t$, $q_{\max}=\max\{1,q_1,\ldots,q_T\}$ and
$G_T=\max\left\{R_s\bar\gamma_T,\sqrt{R_{\dot\mu}/a_{\rm c}}\right\}$.

For $\varepsilon>0$, define
\begin{equation*}
\begin{aligned}
    n_0&\triangleq\frac{4}{\sigma} \max \left \{8q_{\max} \frac{R_{\dot \mu}}{a_{\rm d}} \log\frac{2rT}{\delta},
16\left(1+\max_{s\le T}\beta_s^2\right)
\left(\frac1\varepsilon+\frac{R_s}{2}\right)^2\right\},\\
 t_0&\triangleq\big|\{t\le T:b(t)<n_0\}\big|.
\end{aligned}
\end{equation*}

Let
\begin{equation}\label{eq:potential_bound}
\begin{aligned}
D_T(h)&\coloneqq (d-r) \log\left(1+\frac{Th}{2\lambda(d-r)}\right)\\
&+r\sum_{t=1}^T\log\left(1+
\frac{h}
{2\lambda r+(t-1)h+e^{-R_s\varepsilon}\sigma rb(t)/2}
\right).
\end{aligned}
\end{equation}
We now establish the theoretical guarantees of \Cref{alg:hybrid_glm}.

\begin{theorem}\label{thm:regret}
With probability at least $1-\delta$, \Cref{alg:hybrid_glm} satisfies
\begin{equation}\label{eq:main_bound}
\begin{aligned}
R(T)&\le{}2R_{\mu,\star}t_0
+8e^{3/2}\sqrt{a_{\rm c}}\bar\gamma_T\sqrt{D_T(\bar v/a_{\rm c})T/\kappa_\star}\\
&+16e^{3}a_{\rm c}R_s\bar\gamma_T^2D_T(\bar v/a_{\rm c})+\frac{32R_{\mu,\star}a_{\rm c}\kappa G^2_T}{\log2}
D_T(1/(a_{\rm c}\kappa)).
\end{aligned}
\end{equation}
\end{theorem}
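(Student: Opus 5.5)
We work on the intersection of two events. The first is the event of \Cref{lem:ellipsoid}, on which $\ts\in\mathcal E_t$ for all $t\le T$; it has probability at least $1-\delta/2$. The second is a matrix-martingale event, also of probability at least $1-\delta/2$. On it, for every $t$ with $b(t)\ge n_0$, the accumulated conversational Fisher information at $\ts$ dominates $\frac{\sigma}{2} b(t)\bm{P}_U$. To obtain this event we apply a matrix Freedman/Chernoff inequality on the $r$-dimensional subspace $U$ to the increments $\sum_i \nu_{\rm d}(\tilde{\bm d}_{t,i}^\top\ts)\tilde{\bm d}_{t,i}\tilde{\bm d}_{t,i}^\top$. \Cref{lem:information} lower bounds their conditional means by $\sigma q_t\bm P_U$, and each increment has norm at most $4q_{\max}R_{\dot\mu}/a_{\rm d}$. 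The first term in the definition of $n_0$ is exactly what makes the deviation at most half the mean, uniformly over $t$ after a union bound giving $\log(2rT/\delta)$.

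Next we transfer curvature from $\ts$ to $\hat{\bm\theta}_t$. We need a bound $|\tilde{\bm d}^\top(\hat{\bm\theta}_t-\ts)|\le\varepsilon$ for all queried directions once $b(t)\ge n_0$. The conversational part of $\bm H_t$ gives $\|\hat{\bm\theta}_t-\ts\|_{\bm H_t}\le\gamma_t$. Combining this with the $\frac{\sigma}{2}b(t)\bm P_U$ lower bound, via a self-concordance bootstrapping argument, gives $|\tilde{\bm d}^\top(\hat{\bm\theta}_t-\ts)|\lesssim \gamma_t/\sqrt{\sigma b(t)}$. The second term of $n_0$, with $(1/\varepsilon+R_s/2)^2$, is chosen so that this is at most $\varepsilon$. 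Self-concordance then gives $\nu_{\rm d}(\tilde{\bm d}^\top\hat{\bm\theta}_t)\ge e^{-R_s\varepsilon}\nu_{\rm d}(\tilde{\bm d}^\top\ts)$. Hence for $t>t_0$,
\[
\bm H_t\succeq \lambda\bm I_d+\sum_{s<t}\nu_{\rm c}(\bm x_s^\top\hat{\bm\theta}_t)\bm x_s\bm x_s^\top+\tfrac{1}{2}e^{-R_s\varepsilon}\sigma b(t)\bm P_U.
\]
The rounds $t\le t_0$ are charged $2R_{\mu,\star}$ each, which gives the first term of \eqref{eq:main_bound}.

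For the remaining rounds we follow the regret decomposition of \citet{lee2024a}. Optimism on $\mathcal E_t$ gives $\bm x_{t,\star}^\top\ts\le u_{t,a_t}$. A second-order Taylor expansion of $\mu_{\rm c}$, with $|\ddot\mu_{\rm c}|\le R_s\dot\mu_{\rm c}$, then bounds the per-round regret by
\[
\dot\mu_{\rm c}(\bm x_t^\top\ts)\cdot 2\gamma_t\|\bm x_t\|_{\bm H_t^{-1}}+R_s\gamma_t^2\|\bm x_t\|^2_{\bm H_t^{-1}}.
\]
We relate $\bm H_t$, evaluated at $\hat{\bm\theta}_t$, to matrices built with $\ts$-curvatures. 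This is where the $e^{3/2}$ and $e^3$ factors come from: rounds with small $\|\bm x_t\|_{\bm H_t^{-1}}$ lose only a constant factor. Rounds where $G_T\|\bm x_t\|_{\bm H_t^{-1}}$ is large are handled by the crude bound $2R_{\mu,\star}$. They are counted via an elliptical-potential argument with worst-case curvature $1/\kappa$, which yields the $\kappa G_T^2D_T(1/(a_{\rm c}\kappa))/\log 2$ term. On the good rounds, Cauchy--Schwarz gives $\sqrt{\sum_t\dot\mu_{\rm c}(\bm x_t^\top\ts)}\sqrt{\sum_t\dot\mu_{\rm c}\|\bm x_t\|^2}$. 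The first factor is bounded by $\sqrt{T/\kappa_\star}$ plus lower-order regret, which is absorbed by the usual self-bounding step and produces the $R_s\bar\gamma_T^2$ term.

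The main obstacle, and the new ingredient, is the elliptical potential lemma giving $D_T(h)$: we must show $\sum_t\min\{1,h\|\bm x_t\|^2_{\bm W_t^{-1}}\}\lesssim D_T(h)$, where
\[
\bm W_t=\lambda\bm I+\sum_{s<t}h\,\bm x_s\bm x_s^\top/h_s+\tfrac12 e^{-R_s\varepsilon}\sigma b(t)\bm P_U,
\]
with per-step curvatures at most $h$. Our plan is to rescale to $\tilde{\bm x}_s=\sqrt{h_s}\bm x_s$ with $\|\tilde{\bm x}_s\|^2\le h$ and to split $\mathbb R^d=U\oplus U^\perp$. On $U^\perp$ the standard log-determinant argument in dimension $d-r$ gives the first term of $D_T$. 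On $U$, each step increases $\log\det$ by $\log(1+\|\tilde{\bm x}_t\|^2_{\bm W_t^{-1}})$. We want to bound this sum, over a time-varying matrix that also grows deterministically through $b(t)$, by the telescoping-free sum in \eqref{eq:potential_bound}. We try an AM--GM/trace bound: $\det\bm W_t|_U\le(\tr/r)^r$ with $\tr\le 2\lambda r+(t-1)h+e^{-R_s\varepsilon}\sigma rb(t)/2$ (after normalizing $\lambda$). We then bound the per-step increment by $r\log(1+h/\tr)$, using that the increment of a concave $\log\det$ is maximized when the mass is spread isotropically. The cross-terms between $U$ and $U^\perp$ need a Schur-complement argument, which we expect to be the most delicate part.
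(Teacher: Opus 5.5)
There is a genuine gap in the good-round analysis. The failing step is where you ``relate $\bm H_t$, evaluated at $\hat{\bm\theta}_t$, to matrices built with $\ts$-curvatures,'' losing only a constant on rounds where $\|\bm x_t\|_{\bm H_t^{-1}}$ is small.

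In your lower bound on $\bm H_t$, the weight on a past arm is $\nu_{\rm c}(\bm x_s^\top\hat{\bm\theta}_t)$, which changes with $t$. The elliptical potential argument needs a matrix growing as $\bm W_{t+1}\succeq\bm W_t+w_t\bm x_t\bm x_t^\top$ with a fixed $w_t$, so it cannot be applied. Replacing these weights by $\nu_{\rm c}(\bm x_s^\top\ts)$ would require $|\bm x_s^\top(\hat{\bm\theta}_t-\ts)|=O(1/R_s)$ for \emph{every} past arm $s<t$. A small norm for the current arm says nothing about this, since $\gamma_t\|\bm x_s\|_{\bm H_t^{-1}}$ need not be small. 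The only uniform bound, $2S$, costs a factor $e^{-2SR_s}$, which is precisely the $\kappa$-dependent loss the theorem avoids.

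The missing ingredient is the future-parameter surrogate \eqref{eq:future-parameter} of \citet{lee2024a}: $\bar{\bm\theta}_s\in\argmin_{\bm\theta\in\bigcup_{\ell\ge s}\mathcal C_\ell}\dot\mu_{\rm c}(\bm x_s^\top\bm\theta)$.
\begin{itemize}
\item Since $\hat{\bm\theta}_t\in\mathcal C_t$ for $t>s$, you get $\nu_{\rm c}(\bm x_s^\top\hat{\bm\theta}_t)\ge\nu_{\rm c}(\bm x_s^\top\bar{\bm\theta}_s)$. Together with your query-block bound, this gives $\bm H_t\succeq\frac12\bm V_t$ with $\bm V_t$ monotone and with $t$-independent weights (\Cref{lem:H_to_V}).
\item Good rounds are then defined by $\|\bm x_t\|_{\bm V_t^{-1}}\le 1/(4G_T)$, and the comparison with $\ts$ is made for the current arm only. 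Pick $\ell_t\ge t$ with $\bar{\bm\theta}_t\in\mathcal C_{\ell_t}$. Then $\bm V_{\ell_t}\succeq\bm V_t$ gives $|\bm x_t^\top(\bar{\bm\theta}_t-\ts)|\le2\sqrt2\bar\gamma_T\|\bm x_t\|_{\bm V_t^{-1}}$, hence $\dot\mu_{\rm c}(\bm x_t^\top\bar{\bm\theta}_t)\ge e^{-3/4}\dot\mu_{\rm c}(\bm x_t^\top\ts)$.
\item Cauchy--Schwarz is applied with the weights $\nu_{\rm c}(\bm x_t^\top\bar{\bm\theta}_t)$. Because $\ts$ is feasible in the argmin, these are at most $\bar v/a_{\rm c}$, so \Cref{lem:potential} applies with $h=\bar v/a_{\rm c}$. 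The same feasibility gives $a_{\rm c}\sum_t\nu_{\rm c}(\bm x_t^\top\bar{\bm\theta}_t)\le\sum_t\dot\mu_{\rm c}(\bm x_t^\top\ts)$.
\end{itemize}
Your bad-round count is unaffected, since every curvature on $\Theta$ is at least $1/(a_{\rm c}\kappa)$.

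Two further steps would not give \eqref{eq:main_bound} as stated, and a third concerns the potential lemma.
\begin{itemize}
\item Your second-order Taylor expansion leaves a term $R_s\gamma_t^2\|\bm x_t\|_{\bm H_t^{-1}}^2$ with no curvature weight. Summing it needs the worst-case curvature, producing a term of order $a_{\rm c}\kappa R_s\bar\gamma_T^2D_T(1/(a_{\rm c}\kappa))$ that is not among the terms of the theorem. The paper stays first order: on good rounds $R_s(\bm x_{t,\star}-\bm x_t)^\top\ts\le 3/4$, and integrating \Cref{lem:dot_mu_to_exp} over the gap gives $\Delta_t\le e^{3/4}\dot\mu_{\rm c}(\bm x_t^\top\ts)(\bm x_{t,\star}-\bm x_t)^\top\ts$. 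The $R_s\bar\gamma_T^2$ term comes only from self-bounding.
\item Localizing through $\|\hat{\bm\theta}_t-\ts\|_{\bm H_t}\le\gamma_t$ is circular. The query block of $\bm H_t$ carries curvatures at $\hat{\bm\theta}_t$, which are exactly what you are trying to control. Breaking the circle crudely costs $e^{4SR_s}$ and yields a threshold in $\gamma_t^2$, not the stated $n_0$. Work in $\mathcal C_t$ instead. For $\bm\theta,\ts\in\mathcal C_t$, membership and optimality of $\hat{\bm\theta}_t$ at the midpoint give $\frac12\{\mathcal L_t(\bm\theta)+\mathcal L_t(\ts)\}-\mathcal L_t(\frac{\bm\theta+\ts}2)\le\beta_t^2$. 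After the linear terms cancel, \Cref{lem:jensen} lower-bounds the query terms with curvatures at $\ts$, so \Cref{lem:fisher} applies. With $u=2\|\bm P_U(\bm\theta-\ts)\|_2$ this gives $u/(1+R_su/2)\le 8\beta_t/\sqrt{\sigma b(t)}$, which matches the factor $(1/\varepsilon+R_s/2)^2$ in $n_0$.
\item If you prove \Cref{lem:potential} yourself, the per-step bound of an arm increment by $r\log(1+h/\tr)$ is false: an arm along a small eigendirection of the $U$-block gains more. The paper's proof instead telescopes $\Phi_T=\log\frac{\det\bm J_T}{\det\bm J_0}-\sum_t\log\frac{\det\bm V_t}{\det\bm J_{t-1}}$. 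It bounds the first term by AM--GM on traces and lower-bounds the conversational increments by Jensen: at fixed trace, isotropy \emph{minimizes} $\log\det(\bm I+c\,\bm{\mathsf B}^{-1})$. The $U$/$U^\perp$ cross terms are handled by Fischer's inequality and $\bm Q_U^\top\bm J^{-1}\bm Q_U\succeq\bm{\mathsf B}^{-1}$.
\end{itemize}
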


\begin{proof}[Proof Sketch]
There are two main challenges in the regret analysis of conversational generalized linear bandits.
First, the information contributed by each observation is weighted by a nonlinear term, that is, the joint Hessian contains curvature weights such as $\nu_{\rm c}(\bm{x}_s^\top\hat{\bm{\theta}}_t)$. The analysis of ConDuel~\citep{yang2024conversational} handles this dependence by scaling based on the global worst-case curvature, which introduces a factor $\kappa$ into the confidence width and leads to a suboptimal confidence set.

Thanks to the confidence sequences in \citet{lee2024a}, we can now obtain tighter confidence sequences whose radii do not explicitly depend on $\kappa$.
The second challenge is to retain the information from both arm rewards and key-term comparisons when bounding the elliptical potential sum
\begin{equation}\label{eq:challenge}
    \sum_{t=1}^T\|\bm{x}_t\|^2_{\bm{H}_t^{-1}}.
\end{equation}
Although ConDuel and ConLinUCB combine both sources when estimating the user parameter, key steps in their regret analyses lower-bound the joint information matrix using only one source~\citep{yang2024conversational,wang2023efficient}. Such bounds do not retain the full information from both feedback sources. Moreover, the techniques developed for linear conversational bandits, including those of \citet{liu2025leveraging}, do not directly handle the curvature weights in our joint Hessian. These limitations motivate us to develop more effective techniques for regret analysis.

Our first step is to construct a tractable lower surrogate for $\bm{H}_t$. Let $\bar{\bm{\theta}}_s$
minimize $\dot\mu_{\rm c}(\bm{x}_s^\top\bm{\theta})$ over
$\bigcup_{\ell=s}^T\mathcal C_\ell$, as in~Eq.~\eqref{eq:future-parameter}, and define
\begin{equation*}
 \bm{V}_t\triangleq 2\lambda \bm{I}_d+
       \sum_{s<t}\nu_{\rm c}(\bm{x}_s^\top\bar{\bm{\theta}}_s)\bm{x}_s\bm{x}_s^\top
       +\frac{e^{-R_s\varepsilon}\sigma b(t)}2\bm{P}_U.
\end{equation*}
\begin{lemma}\label{lem:H_to_V}
With probability at least
$1-\delta$, for every $t_0<t\le T$, we have
\begin{equation*}
 \bm{H}_t\succeq\frac12\bm{V}_t.
\end{equation*}
\end{lemma}

Lemma~\ref{lem:H_to_V} replaces the comparison term in the joint Hessian by an information lower bound, which is more convenient for later analysis.

We next bound the potential sum. To jointly exploit information from both the arm side and the key-term side, we modify the classical elliptical potential lemma and get Theorem~\ref{lem:potential}.
\begin{theorem}\label{lem:potential}
Let $h>0$ and
$\bar{\bm{\theta}}_1,\ldots,\bar{\bm{\theta}}_T\in\Theta$
satisfy $\nu_{\rm c}(\bm{x}_t^\top\bar{\bm{\theta}}_t)\le h$. Then
\[
\sum_{t=1}^T
\log\left(
1+\nu_{\rm c}(\bm{x}_t^\top\bar{\bm{\theta}}_t)
\|\bm{x}_t\|_{\bm{V}_t^{-1}}^2
\right)
\le D_T(h),
\]
where $D_T(h)$ is defined in~Eq.~\eqref{eq:potential_bound}.
The same bound holds for arbitrary coefficients $w_t\in[0,h]$
in place of $\nu_{\rm c}(\bm{x}_t^\top\bar{\bm{\theta}}_t)$, with the same
replacements in $\bm{V}_t$.
\end{theorem}
Combining these results with the regret decomposition similar to \citet{lee2024a} yields the final regret bound. Detailed proofs are given in Appendix~\ref{app:regret}.
\end{proof}

\textbf{Regret order.} Now we give the regret order of HyConUCB. Consider
$b(t)=\lfloor\rho t\rfloor$ with $\rho>0$, and define
$\eta \triangleq \sigma \rho r / 4$. Let $\tilde O$ hide logarithmic factors and constants depending on $S$, $\lambda$, $\varepsilon$, and the GLM coefficients. In this case, our regret bound is 
\begin{equation}\label{eq:rate}
\begin{aligned}
R(T)={}&\tilde O \Bigg(
\underbrace{\sqrt{\frac{d(d-r)T}{\kappa_\star}}+\kappa d(d-r)}_{U^\perp} \\
&+ \underbrace{\sqrt{\frac{drT}{\kappa_\star(1+\eta/\bar v)}} 
+\frac{dr}{\kappa^{-1}+\eta}}_{U}\Bigg).
\end{aligned}
\end{equation}
The two groups represent the contributions from subspace $U^\perp$ and $U$, respectively. Conversations improve the contribution from $U$, while the contribution from $U^\perp$ retains its reward-only order. It is intuitive because the conversational information only covers this subspace.

We derive this regret order by bounding $D_T(h)$. For $h>0$, the second term of $D_T(h)$ satisfies
\begin{equation*}
\begin{aligned}
D_T(h)^{(2)}
\le{}&\left(1+\frac{e^{-R_s\varepsilon}\sigma}{4\lambda}\right)
\frac{rh}{h+2e^{-R_s\varepsilon}\eta} \\
&\times \log\left(
1+\frac{T(h+2e^{-R_s\varepsilon}\eta)}{2\lambda r}
\right)
\end{aligned}
\end{equation*}
The proof is given in Appendix~\ref{app:linear_query}.
Combining this bound with the first term of $D_T(h)$ gives
\begin{equation}\label{eq:linear_schedule_order}
\begin{aligned}
D_T(h)=\tilde O\left((d-r)+r\frac{h}{h+\eta}\right).
\end{aligned}
\end{equation}
Substituting Eq.~\eqref{eq:linear_schedule_order} into Theorem~\ref{thm:regret} gives Eq.~\eqref{eq:rate}.

\textbf{Regret comparisons.}
We compare our bound with the reward-only benchmark and ConDuel. For an explicit comparison, consider the $r$-dimensional subspace $U$ only. The reward-only bound \citep{lee2024a} has a contribution from $U$ of
\(\tilde O\left(\sqrt{\frac{drT}{\kappa_\star}}+\kappa dr\right).\)
Compared with it, our leading term is reduced by a factor of $1/\sqrt{1+\eta/\bar v}$. If $\sigma=\Omega(1/d)$, the lower-order term in~Eq.~\eqref{eq:rate} is $\tilde O(d^2)$, removing $\kappa$ from the reward-only lower-order term. The condition $\sigma=\Omega(1/d)$ is attainable, as illustrated in Appendix~\ref{app:a_good_case}.

Next, we adapt the analysis of ConDuel~\citep{yang2024conversational} to our regret definition and arm-selection rule, retaining its confidence set construction and approach to bounding the potential sum in Eq.~\eqref{eq:challenge}. This gives a contribution from $U$ of
\(\tilde O\left(\kappa \sqrt{\frac{drT}{\kappa_\star}}+\kappa^2 dr\right).\)
Relative to our bound, this bound has an additional factor of $\kappa$ in both its leading and lower-order terms. This factor is from its suboptimal confidence set. Moreover, it does not exhibit the improvement over the reward-only benchmark captured by $\eta$ in~Eq.~\eqref{eq:rate}, while our new potential sum bounding techniques establish this improvement for our algorithm.

%% file: 6-reverse-feedback.tex
\section{Extension to Reverse Configuration}
\label{sec:reverse_feedback}

The hybrid feedback setting considered above can also be extended to reverse configuration: users compare arms and provide absolute feedback on a single key-term at a time. Let $\bm{d}_{a,b}=\bm{x}_{a}-\bm{x}_{b}$ and $\bm{d}_t=\bm{d}_{a_t,a'_t}$. At each round, the learner collects $q_t$ responses $\tilde r_{t,i}\sim p_{\mathrm{c}}(\cdot\mid \bm{z}_{t,i}^\top\ts)$, with $\bm{z}_{t,i}=\tilde {\bm{x}}_{k_{t,i}}$ for $i=1,\ldots,q_t$, then selects $(a_t,a'_t)$ and observes $o_t\sim p_{\mathrm{d}}(\cdot\mid \bm{d}_t^\top\ts)$.

The cumulative negative log-likelihood is
\begin{equation}\label{eq:reverse_loss}
\begin{aligned}
\mathcal L_t(\bm{\theta})
={}&\sum_{s<t}\frac{-o_s\bm{d}_s^\top\bm{\theta}+m_{\mathrm{d}}(\bm{d}_s^\top\bm{\theta})}{a_{\mathrm{d}}}\\
&+\sum_{s\le t}\sum_{i=1}^{q_s}
\frac{-\tilde r_{s,i}\bm{z}_{s,i}^\top\bm{\theta}+m_{\mathrm{c}}(\bm{z}_{s,i}^\top\bm{\theta})}{a_{\mathrm{c}}}.
\end{aligned}
\end{equation}
Using this loss and its Lipschitz constant $L_t$, construct the likelihood-ratio confidence sequence and its ellipsoidal relaxation as in Section ~\ref{sec:confidence}. 
Lemmas~\ref{lem:confidence} and~\ref{lem:ellipsoid} still apply, with appropriate adjustments to the feature-norm bounds for the two feedback types.

For each conversation, apply Eq.~\eqref{eq:query_selection} to the pool of single key-term feature vectors. The argument of Lemma~\ref{lem:information} yields the same form of conditional information lower bound on the span of these vectors, with the absolute-feedback curvature weights.

For arm selection, first retain arms that are not confidently worse than any competitor:
\begin{equation*}
\mathcal P_t=\bigl\{a\in\mathcal A_t:
\bm{d}_{b,a}^\top\hat{\bm{\theta}}_t
\le\gamma_t\|\bm{d}_{b,a}\|_{\bm{H}_t^{-1}},\ \forall b\in\mathcal A_t\bigr\}.
\end{equation*}
Then select the pair with the largest uncertainty in its utility difference:
\begin{equation*}
(a_t, a'_t) \in \argmax_{a,b\in\mathcal P_t}
\|\bm{x}_{a} - \bm{x}_{b}\|_{\bm{H}_t^{-1}}.
\end{equation*}

Following \citet{saha2021optimal}, we define the cumulative dueling regret as
\begin{equation}\label{reverse_regret}
    R_{\mathrm{duel}}(T)=\sum_{t=1}^T\left(\bm{x}_{t,\star}-
\frac{\bm{x}_{a_t}+\bm{x}_{a'_t}}2\right)^\top\ts.
\end{equation}
On the confidence event, an optimal arm belongs to $\mathcal P_t$, so this selection rule bounds the instantaneous regret by $2\gamma_t\|\bm{d}_t\|_{\bm{H}_t^{-1}}$, following the MaxInp analysis of \citet[Theorem~3]{saha2021optimal}. Summing over rounds and applying Cauchy--Schwarz gives
\begin{equation}\label{eq:reverse_regret}
\begin{aligned}
R_{\mathrm{duel}}(T)&=\sum_{t=1}^T\left(\bm{x}_{t,\star}-
\frac{\bm{x}_{a_t}+\bm{x}_{a'_t}}2\right)^\top\ts\\
&\le2\bar\gamma_T\sqrt{T\sum_{t=1}^T\|\bm{d}_t\|_{\bm{H}_t^{-1}}^2}.
\end{aligned}
\end{equation}
We bound the potential sum in Eq.~\eqref{eq:reverse_regret} by adapting \Cref{lem:H_to_V} and \Cref{lem:potential}, with arm differences and single key-term feature vectors replacing arm features and key-term differences, respectively. The proofs follow the same arguments as before and are omitted for brevity.

%% file: 7-syn-experiments.tex
\section{Experiments}

In this section, we describe experimental settings and results on both synthetic data and real-world data. We set the trade-off parameter of HyConUCB to $\xi=0.05$ and use the mean functions $\mu_{\mathrm{c}}(x)=\mu_{\mathrm{d}}(x)=1/(1+e^{-x})$.

\noindent\textbf{Baselines. }
We compare our algorithm with the following baselines:
\begin{itemize}
    \item \textbf{ConDuel}~\citep{yang2024conversational}: ConDuel is designed for conversational dueling bandits, in which both arm-level and key-term-level observations are dueling feedback. To adapt it to our experimental setting, we modify its arm-selection rule to a UCB rule.

    \item \textbf{OFUGLB-UCB}~\citep{lee2024a}: OFUGLB is developed for generalized linear bandits in \citet{lee2024a}. We adapt its original optimism in the face of uncertainty (OFU) rule to a UCB rule, which is more suitable for our conversational bandit setting.

    \item \textbf{Var-Random}: A variant of our algorithm that selects a key-term pair uniformly at random.

    \item \textbf{Var-AM}: A variant of our algorithm that obtains absolute conversational feedback. Its MLE and confidence sequences are adjusted accordingly. For query $i$ in round $t$, it selects
    \[
    k_{t,i}\in\argmax_{k\in\mathcal K}\|\tilde {\bm{x}}_k\|_{\bm{M}_{t,i}^{-1}},
    \]
    where $\bm{M}_{t,i}$ is defined in Section~\ref{sec:algorithm}.
\end{itemize}

\subsection{Experiments on Synthetic Data}
\noindent\textbf{Data Generation. }We construct the synthetic data following \citet{zhang2020conversational}.
We set the dimension to $d=10$, the number of users to $N=100$, the size of the key-term pool to $|\mathcal K|=500$, and the size of the arm pool to $|\mathcal A|=5000$.
We generate the arm feature vectors as follows.
First, for each key-term $k\in\mathcal K$, we sample a pseudo-feature vector $\dot {\bm{x}}_k\in\mathbb R^d$, whose coordinates are drawn independently from the uniform distribution $\mathcal U(-1,1)$.
For each arm $a\in[K]$, we randomly sample an integer $m_a\in\{1,2,3\}$ and then select uniformly without replacement a subset $\mathcal K_a\subseteq\mathcal K$ such that $|\mathcal K_a|=m_a$.
The weights are defined by $W_{a,k}=\frac{1}{m_a}\mathbf 1\{k\in\mathcal K_a\}$.
For each arm $a$, we first draw an unnormalized feature vector from $\bar {\bm{x}}_a\sim\mathcal N\left(\sum_{k\in\mathcal K_a}W_{a,k}\dot {\bm{x}}_k,\sigma_x^2\bm{I}_d\right)$, where $\sigma_x=0.02$, and then normalize it as $\bm{x}_a=\frac{\bar {\bm{x}}_a}{\|\bar {\bm{x}}_a\|_2}$.
Thus, every arm satisfies $\|\bm{x}_a\|_2=1$.
After generating all arm features, we compute the feature vector of each key-term $k$ as $\tilde {\bm{x}}_k=\frac{\sum_{a\in\mathcal A}W_{a,k}\bm{x}_a}{\sum_{a\in\mathcal A}W_{a,k}}$.
Finally, for each user $u\in[N]$, we independently sample $\bm{g}_u\sim\mathcal N(\bm{0},\bm{I}_d)$ and define the user parameter as $\bm{\theta}_u^\star=S\frac{\bm{g}_u}{\|\bm{g}_u\|_2}$ for $S\in\{4,6,8\}$.
In our simulations, we set the time horizon to $T=5000$ and the conversation-frequency function to $b(t)=10\left\lfloor\frac{t}{50}\right\rfloor$.
At each round $t$, we randomly select 100 arms from $\mathcal A$ without replacement to form $\mathcal A_t$.
The same $\mathcal A_t$ is presented to all algorithms.

\noindent\textbf{Cumulative Regret. }
We repeat each experiment over $10$ random seeds, averaging over all users within each run. As shown in Fig.~\ref{fig:synthetic_regret_S}, HyConUCB reduces the final mean cumulative regret by more than $40\%$ relative to OFUGLB-UCB, the reward-only baseline, for every tested value of $S$. HyConUCB achieves the lowest final mean regret for $S=4$ and $S=6$, while Var-Random performs slightly better for $S=8$. As $S$ increases, ConDuel's advantage over OFUGLB-UCB narrows. This pattern is consistent with ConDuel's sensitivity to the worst-case curvature, which can become small as the parameter norm grows.

\begin{figure}[H]
  \centering
  \includegraphics[width=\linewidth]{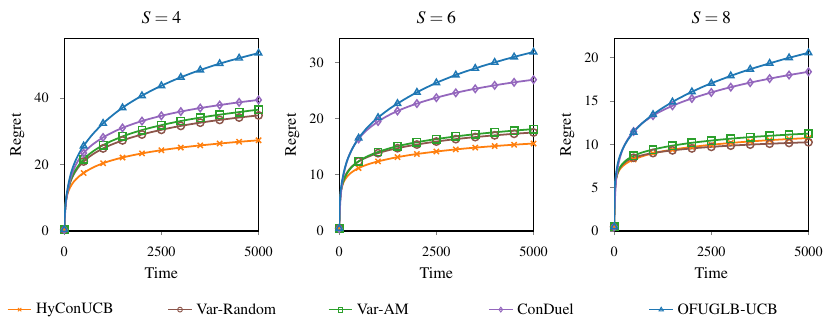}
  \caption{Cumulative regret on synthetic data for different values of the parameter norm bound $S$. The advantage of HyConUCB and its variants over other baselines becomes more distinct as $S$ increases, consistent with our theoretical analysis.}
  \label{fig:synthetic_regret_S}
\end{figure}

\noindent\textbf{Parameter Estimation Error. }We measure estimation accuracy by the mean error $\|\hat {\bm{\theta}}_t-\ts\|_2$, with smaller values indicating more accurate estimates. Fig.~\ref{fig:synthetic_parameter_error_S} shows that the error generally decreases over time for all algorithms. HyConUCB achieves the lowest final mean error for $S=4$ and $S=6$ and remains competitive for $S=8$, where Var-Random achieves the lowest error and Var-AM also finishes slightly below HyConUCB. Across all three settings, HyConUCB yields more accurate estimates than OFUGLB-UCB, demonstrating the benefit of conversational feedback for parameter estimation.

\begin{figure}[H]
  \centering
  \includegraphics[width=\linewidth]{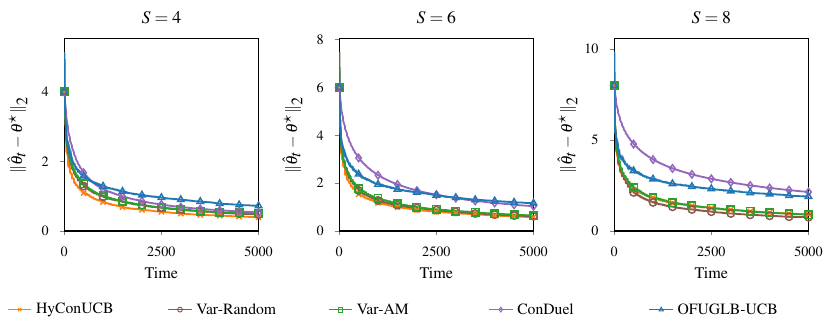}
  \caption{Parameter estimation error on synthetic data for different values of $S$. Lower values indicate more accurate estimates. HyConUCB and its variants generally achieve lower estimation errors than other baselines.}
  \label{fig:synthetic_parameter_error_S}
\end{figure}

\noindent\textbf{Advantage of Hybrid Feedback. }To examine the benefit of hybrid feedback, we use the common intercept construction in Appendix~\ref{app:a_good_case}. 
More concretely, we first generate the relative arm features $\bm{u}_a\in\mathbb R^r$ and user directions $\bm{v}_u\in\mathbb R^r$ using the sampling procedure above and normalize both to unit length, so that $\|\bm{u}_a\|_2=\|\bm{v}_u\|_2=1$.
We then set $d=r+1$ and construct $\bm{x}_a=\frac{1}{\sqrt{2}}(1,\bm{u}_a)$ and $\bm{\theta}_u^\star=\sqrt{2}(B,C\bm{v}_u)$.
Consequently, we have $S=\sqrt{2(B^2+C^2)}$ and $\bm{x}_a^\top\bm{\theta}_u^\star=B+C\,\bm{u}_a^\top \bm{v}_u$.
We set $r=10$, $d=11$, $C=1$, and vary $B\in\{2,4,6\}$. The common intercept $B$ cancels out in key-term comparisons, making relative feedback more informative than absolute responses in this setting.

Fig.~\ref{fig:synthetic_regret_B} shows that HyConUCB achieves the lowest final mean cumulative regret for all three values of $B$. ConDuel also outperforms OFUGLB-UCB, consistent with the value of relative conversational feedback in this setting. As $B$ increases, Var-AM's relative advantage over OFUGLB-UCB diminishes, and their regret curves nearly coincide at $B=6$. While Var-AM performs similarly to HyConUCB in the preceding synthetic experiments, it falls further behind HyConUCB here. This larger performance gap provides further evidence for the benefit of hybrid feedback in this setting.

\begin{figure}[H]
  \centering
  \includegraphics[width=\linewidth]{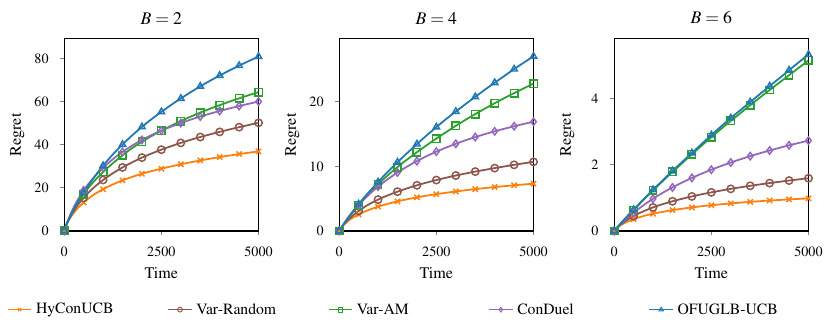}
  \caption{Cumulative regret on synthetic data for different values of the common intercept $B$. As $B$ increases, relative feedback becomes more informative than absolute feedback. The deteriorating performance of Var-AM is consistent with this trend.}
  \label{fig:synthetic_regret_B}
\end{figure}

%% file: 8-real-experiments.tex
\subsection{Experiments on the Yelp Dataset}
\noindent\textbf{Data Generation.} We conduct experiments on the Yelp dataset, which we preprocess following \citet{zhang2020conversational}.
We retain restaurants with at least \(50\) reviews and users with at least \(100\) reviews, and then randomly select \(100\) users for the simulations.
Each restaurant is treated as an arm, and its associated categories are treated as key-terms.
The \(500\) key-terms with the largest restaurant coverage are retained.
We construct restaurant feature vectors from the city, categories, star rating, review count, and \(34\) additional common business attributes, and then apply PCA to obtain \(10\)-dimensional arm feature vectors.
Finally, we convert the original five-point ratings into binary feedback.
For each user \(u\), we estimate the preference parameter by solving the following
\(\ell_2\)-ball-constrained least-squares problem:
\(\bm{\theta}_u^\star
=
\arg\min_{\|\bm{\theta}\|_2\le S}
\sum_{a\in\mathcal I_u}
\left(y_{u,a}-\bm{x}_a^\top\bm{\theta}\right)^{2}.
\)
We set \(S=2\) because preliminary unconstrained estimates show that
\(93\) of the \(100\) users have parameter norms below \(2\).
Both arm and conversational feedback are generated using \(\bm{\theta}_u^\star\).
For each user, we run \(T=5{,}000\) rounds and compute cumulative regret.

\noindent\textbf{Results.} We repeat this experiment \(10\) times, and Fig.~\ref{fig:yelp_regret} shows the mean cumulative regret. HyConUCB achieves the lowest cumulative regret on the Yelp dataset. ConDuel performs second best, followed by Var-AM and Var-Random, whereas OFUGLB-UCB has the highest cumulative regret.

\begin{figure}[H]
  \centering
  \begingroup
    \phantomsubcaption\label{fig:yelp_regret}
    \phantomsubcaption\label{fig:llm_oracle_regret}
    \phantomsubcaption\label{fig:llm_expert_regret}
  \endgroup
  \includegraphics[width=\linewidth]{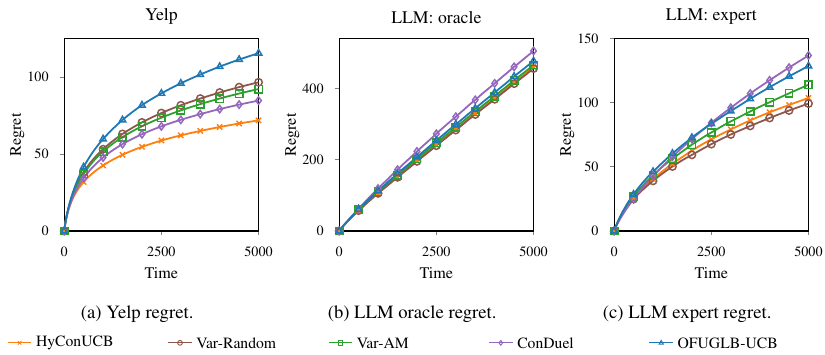}
  \caption{Results on real-world data. For LLM routing, oracle regret in panel (b) grows approximately linearly, making differences between algorithms less apparent. We therefore use expert regret in panel (c) as our primary metric.}
\end{figure}

\subsection{Experiments on LLM Routing}
In large language model (LLM) routing systems, a router selects an LLM from the candidate pool to answer a query. We formulate LLM routing as a conversational bandit problem, treating candidate LLMs as arms and task categories as key-terms.

\noindent\textbf{Dataset.} We conduct our experiments on the xRouteBench dataset~\citep{feng2026llmroute}, using its \texttt{llmrouter\_generic} configuration. This configuration contains $13$ tasks and $18$ candidate LLMs. For each question, the dataset records the performance score of every candidate LLM. We divide the data into a $60\%$ training split, a $20\%$ validation split, and a $20\%$ test split. The validation split is used for preliminary evaluation and hyperparameter selection. 
We select $4$ tasks and $4$ models. Hence, $P=K=4$. Our choice of $K=4$ is consistent with the small model pools used in prior LLM routing studies, including RouteLLM with two models~\citep{ong2024routellm}, MetaLLM with four models~\citep{nguyen2025metallm}, and greedy LinUCB with six models~\citep{poon2026online}. 
For each selected task, we designate the model with the highest accuracy on that task as its expert. The selected tasks and their expert models are ARC-Challenge (Qwen3-Coder-Next), GSM8K (Llama-4-Maverick), MATH (Mistral-Small-3-24B-Instruct), and MMLU (Llama-3.3-70B-Instruct-Turbo).

\noindent\textbf{Feature and Feedback Construction.}
We construct the feature vectors as follows. First, we use \texttt{BAAI/bge-small-en-v1.5} to obtain a query embedding. 
We then train a classifier using data from the training split. The classifier takes a query embedding as input and outputs a probability vector $\bm{p}_t\in\mathbb{R}^4$, whose components represent the predicted probabilities for the four task categories. 
We append a constant coordinate of $1$ to $\bm{p}_t$ to obtain $\bm{h}_t\in\mathbb{R}^5$. For each model $k$, we define $\bm{v}_{t,k}=\bm{e}_k\otimes \bm{h}_t$. Finally, we normalize this vector and append a global intercept, yielding $\bm{x}_{t,k}\in \mathbb{R}^{21}$. Hence, $d=21$.

For arm feedback, we use the logged outcomes in the dataset. We associate each task with its designated expert model. Thus the weight matrix $\bm{W}=\bm{I}_4$. Because the dataset does not contain conversational feedback records, we generate this feedback synthetically. We first estimate the user parameter $\ts$ via maximum likelihood using the training split. It should be noted that $\ts$ is used exclusively to generate conversational feedback, while arm-side rewards are taken directly from the dataset's logged model outcomes. Because the resulting estimate has norm $\|\ts\|_2\approx 5.504$, we set $S=6$. During the online experiment, we generate conversational feedback according to Eq.~\eqref{eq:dueling_feedback}. We run each experiment for $T=5{,}000$ rounds on the test split and set the conversation-frequency function $b(t)=\lfloor t/5\rfloor$.

\noindent\textbf{Metrics.} Let $Q_t$ denote the query at round $t$, and let $r_{Q_t,k}\in\{0,1\}$ indicate whether model $k$ answers it correctly. Define the cumulative oracle regret
\begin{equation*}
    R_T=\sum_{t=1}^{T}
\left(\max_k r_{Q_t,k}-r_{Q_t,a_t}\right). 
\end{equation*}
However, it can grow linearly even when the task expert is always selected, because another model may answer a query correctly when the task expert answers it incorrectly. So we take the cumulative expert regret $R_T^{\mathrm{expert}}$ as our primary metric.
Let $k^{\mathrm{expert}}(Q_t)$ be the expert model on query $Q_t$. Define
\begin{equation*}
    R_T^{\mathrm{expert}}=
    \sum_{t=1}^{T}
    \left(
    r_{Q_t,k^{\mathrm{expert}}(Q_t)}-r_{Q_t,a_t}
    \right).    
\end{equation*}
This metric measures the gap between the selected model and the task expert.

\noindent\textbf{Results.}
We repeat the experiment over $200$ random seeds and report the mean cumulative oracle and expert regret in Fig.~\ref{fig:llm_oracle_regret} and Fig.~\ref{fig:llm_expert_regret}, respectively.
HyConUCB and its variants outperform other baselines in terms of final mean cumulative regret, with Var-Random ranking first, followed closely by HyConUCB and then Var-AM. 
ConDuel incurs the highest regret, slightly higher than that of OFUGLB-UCB.

%% file: 9-conclusion.tex
\section{Concluding Remarks}
\label{sec:conclusion}

In this paper, we formulate the conversational generalized linear bandits with hybrid feedback problem. 
We develop time-uniform confidence sequences for heterogeneous arm and key-term feedback and propose the HyConUCB algorithm. 
Our theoretical analysis shows that HyConUCB can achieve a tighter regret bound than the reward-only benchmark and prior conversational generalized linear bandit methods. 
Experiments on synthetic and real-world datasets further demonstrate its effectiveness.

Several directions remain open. From the hybrid feedback perspective, we consider a setting in which arm interactions and key-term queries use different feedback types. A natural extension is a fully hybrid setting, in which both channels support absolute and relative feedback and the learner can adaptively select the feedback type.

From the conversation perspective, we assume that conversations incur no cost and schedule them according to a fixed frequency function. In practice, however, conversations may incur costs. Developing algorithms that account for these costs and adaptively decide when to conduct conversations is therefore a promising direction for future work.

%% file: appendix/1-confidence-set.tex
\section{Auxiliary Results}
\label{app:confidence_localization}

\subsection{Ellipsoidal Confidence Sequence}\label{app:confidence_set}
\begin{proof}[Proof of Lemma~\ref{lem:ellipsoid}]
Since $\Theta$ is convex and
$\widehat{\bm{\theta}}_t$ minimizes $\mathcal L_t$ over $\Theta$, we have for $\bm{\theta}\in\Theta$, 
\[
\nabla\mathcal L_t(\widehat{\bm{\theta}}_t)^\top
 (\bm{\theta}-\widehat{\bm{\theta}}_t)\ge0.
\]
Every observation feature $\bm{v}$ in $\mathcal L_t$ satisfies
$\|\bm{v}\|_2\le2$, and hence
$|\bm{v}^\top(\bm{\theta}-\widehat{\bm{\theta}}_t)|\le4S$.
By Lemma~D.1 in \citet{lee2024a}, we have
\begin{align*}
\int_0^1(1-s)\dot\mu\bigl(\bm{v}^\top
 ((1-s)\widehat{\bm{\theta}}_t+s\bm{\theta})\bigr)\,ds
 \ge\frac{\dot\mu(\bm{v}^\top\widehat{\bm{\theta}}_t)}{2(1+2SR_s)}.
\end{align*}
By Taylor's formula and the first-order optimality condition, we obtain
\begin{align*}
\beta_t^2
&\ge\mathcal L_t(\bm{\theta})-\mathcal L_t(\widehat{\bm{\theta}}_t)\\
&\ge\int_0^1(1-s)\|\bm{\theta}-\widehat{\bm{\theta}}_t\|^2_{
 \nabla^2\mathcal L_t((1-s)\widehat{\bm{\theta}}_t+s\bm{\theta})}\,ds\\
&\ge\frac{\|\bm{\theta}-\widehat{\bm{\theta}}_t\|^2_{
 \nabla^2\mathcal L_t(\widehat{\bm{\theta}}_t)}}
 {2(1+2SR_s)}.
\end{align*}
Adding the regularization term and using
$\|\bm{\theta}-\widehat{\bm{\theta}}_t\|_2\le2S$
therefore gives
\begin{align*}
\|\bm{\theta}-\widehat{\bm{\theta}}_t\|_{\bm{H}_t}^2
&\le2(1+2SR_s)\beta_t^2+4\lambda S^2\\
&\le2(1+2SR_s)(\beta_t^2+4\lambda S^2)=\gamma_t^2.
\end{align*}
Thus $\mathcal C_t\subseteq\mathcal E_t$, which completes the proof.
\end{proof}

\subsection{Guarantees for the Query Selection Rule}
\label{app:algorithm_proofs}

Suppress the query indices and write $A(\bm{p})=\sum_e p_es_e$ and
$j_{\rm geo}=\max_{\bm{p}\in\Delta_{N_q}}j(\bm{p})>0$.
Let $\bm{p}^\star$ solve~Eq.~\eqref{eq:query_selection}.
For $e_{\max}\in\arg\max_e s_e$ and
$\bm{p}_\tau=(1-\tau)\bm{p}^\star+\tau\bm{\delta}_{e_{\max}}$, we have
$j(\bm{p}_\tau)\ge(1-\tau)j(\bm{p}^\star)$.
Optimality of $\bm{p}^\star$ therefore gives
\[
0\ge\tau\{1-A(\bm{p}^\star)\}+\xi\log(1-\tau).
\]
Dividing by $\tau$ and letting $\tau\downarrow0$ yields
\begin{equation}\label{eq:app_maxinp_guarantee}
A(\bm{p}^\star)\ge1-\xi.
\end{equation}
Thus, the expected MaxInp score under $\bm{p}^\star$ is at least $1-\xi$.

Next, take $\bm{p}^{\rm geo}\in\arg\max_{\bm{p}\in\Delta_{N_q}}j(\bm{p})$ and set
$\bm{p}_\tau=(1-\tau)\bm{p}^\star+\tau \bm{p}^{\rm geo}$.
Since $A(\bm{p})\in[0,1]$ and $j$ is concave, optimality gives
\[
0\ge-\tau+\xi\log\left(
1+\tau\left\{\frac{j_{\rm geo}}{j(\bm{p}^\star)}-1\right\}\right).
\]
Again dividing by $\tau$ and letting $\tau\downarrow0$ gives
\begin{equation}\label{eq:app_coverage_guarantee}
j(\bm{p}^\star)\ge\frac{\xi}{1+\xi}j_{\rm geo}.
\end{equation}
Equations~\eqref{eq:app_maxinp_guarantee} and~Eq.~\eqref{eq:app_coverage_guarantee} formalize the balance in the query rule: the first preserves the MaxInp score, while the second guarantees coverage of the query subspace. Next, we prove Lemma~\ref{lem:information}.

Let $c_\star=\min_e\nu_{\mathrm{d}}(\bm{q}_e^\top\ts)>0$, and let
$\mathcal H$ be the history immediately before the current query,
including earlier queries in the same round. Then
\begin{align*}
\mathbb E\!\left[
\nu_{\mathrm{d}}(\tilde {\bm{d}}^\top\ts)
\tilde {\bm{d}}\tilde {\bm{d}}^\top\mid\mathcal H\right]
&=\sum_e p_e^\star\nu_{\mathrm{d}}(\bm{q}_e^\top\ts)\bm{q}_e\bm{q}_e^\top\\
&\succeq c_\star \bm{V}(\bm{p}^\star)
\succeq\frac{\xi}{1+\xi}c_\star j_{\rm geo}\bm{P}_U.
\end{align*}
Thus each query satisfies the information bound with
\begin{equation}\label{eq:app_query_sigma}
\sigma=\frac{\xi}{1+\xi}c_\star j_{\rm geo}.
\end{equation}
Summing over the $q_t$ queries proves
Lemma~\ref{lem:information}.

\subsection{Statistical Guarantees}
\label{sec:statistical_grarantees}

\begin{lemma}\label{lem:jensen}
For $j\in\{\mathrm{c},\mathrm{d}\}$ and any $u,v$,
\[
\frac{m_j(u)+m_j(v)}2-m_j\left(\frac{u+v}2\right)
\ge\frac{\dot\mu_j(u)(u-v)^2}{8(1+R_s|u-v|/2)^2}.
\]
\end{lemma}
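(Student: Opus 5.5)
Set $w=(u+v)/2$ and $\Delta=u-v$, so that $u=w+\Delta/2$ and $v=w-\Delta/2$. The left-hand side is a symmetric second difference of $m_j$ about $w$, so the plan is to write it with Taylor's formula in integral form and then lower-bound $\dot\mu_j=m_j''$ along the integration path using self-concordance. Because $m_j'=\mu_j$ and $m_j''=\dot\mu_j$, expanding around $w$ gives
\[
m_j(w\pm\Delta/2)=m_j(w)\pm\tfrac{\Delta}{2}\mu_j(w)+\tfrac{\Delta^2}{4}\int_0^1(1-s)\,\dot\mu_j\bigl(w\pm s\Delta/2\bigr)\,ds .
\]
Averaging the two expansions cancels the first-order terms, and we obtain
\[
\frac{m_j(u)+m_j(v)}2-m_j(w)=\frac{\Delta^2}{8}\int_0^1(1-s)\Bigl[\dot\mu_j(w+s\Delta/2)+\dot\mu_j(w-s\Delta/2)\Bigr]ds .
\]

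Both integrands are nonnegative, so we may discard the term $\dot\mu_j(w-s\Delta/2)$. The remaining term $\dot\mu_j(w+s\Delta/2)$ interpolates between $\dot\mu_j(w)$ at $s=0$ and $\dot\mu_j(u)$ at $s=1$. Changing variables to $x=w+s\Delta/2$ rewrites it as an integral over the segment from $w$ to $u$:
\[
\frac{m_j(u)+m_j(v)}2-m_j(w)\ \ge\ \frac{\Delta^2}{8}\int_0^1(1-s)\,\dot\mu_j\bigl(u-(1-s)\Delta/2\bigr)\,ds .
\]
The segment from $w$ to $u$ has length $|\Delta|/2$.

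The key tool is the self-concordance consequence already used in the proof of Lemma~\ref{lem:ellipsoid}, namely Lemma~D.1 of \citet{lee2024a}: if $|\ddot\mu|\le R_s\dot\mu$, then for any $a,b$,
\[
\int_0^1(1-s)\,\dot\mu\bigl(a+s(b-a)\bigr)\,ds\ \ge\ \frac{\dot\mu(a)}{2+R_s|b-a|}.
\]
In the proof of Lemma~\ref{lem:ellipsoid} this appears in the cruder form $\dot\mu(a)/\bigl(2(1+2SR_s)\bigr)$, obtained by taking $|b-a|\le 4S$. Here we apply it with base point $a=u$ and endpoint $b=w$, so $|b-a|=|\Delta|/2$. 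After reparametrizing $s\mapsto 1-s$ so that the weight $(1-s)$ is attached to the correct endpoint, this gives
\[
\frac{m_j(u)+m_j(v)}2-m_j(w)\ \ge\ \frac{\Delta^2}{8}\cdot\frac{\dot\mu_j(u)}{2+R_s|\Delta|/2}.
\]
Since $2+x\le 2(1+x/2)\le 2(1+x/2)^2$ for $x\ge0$, it then remains to convert this into the claimed constant $8(1+R_s|\Delta|/2)^2$.

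The main obstacle is the orientation of the integration weight. The weight $(1-s)$ favors the base point $w$, whereas the statement is anchored at $u$. If the reparametrization leaves the weight on the wrong end, I would instead use the pointwise bound $\dot\mu_j(x)\ge\dot\mu_j(u)e^{-R_s|x-u|}$, which follows from $|\tfrac{d}{dx}\log\dot\mu_j|\le R_s$, and integrate it directly. That gives $\dot\mu_j(u)\cdot\frac{1}{4}\int_0^1(1-s)e^{-R_s|\Delta|(1-s)/2}\,ds$ up to the prefactor. With $c=R_s|\Delta|/2$, the elementary inequality $\int_0^1 t\,e^{-ct}\,dt\ge 1/\bigl(2(1+c)^2\bigr)$ produces exactly the $(1+R_s|\Delta|/2)^2$ denominator. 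The constant bookkeeping between $1/8$, $1/16$ and similar factors must be checked carefully, and the squared denominator gives room to absorb a factor of $2$.
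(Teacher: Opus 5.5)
Your first route does not work. After the substitution $s\mapsto 1-s$ your integral becomes $\int_0^1 s\,\dot\mu_j\bigl(u+s(w-u)\bigr)\,ds$, so the weight is $s$, which vanishes at $u$. Lemma~D.1 of \citet{lee2024a} bounds $\int_0^1(1-s)\dot\mu(a+s(b-a))\,ds$ from below by $\dot\mu$ at the endpoint that carries the weight $1-s$. Here it therefore gives $\dot\mu_j(w)$, not $\dot\mu_j(u)$. The intermediate inequality you wrote, with right-hand side $\frac{\Delta^2}{8}\cdot\frac{\dot\mu_j(u)}{2+R_s|\Delta|/2}$, is in fact false. Take the logistic model $m_j(x)=\log(1+e^x)$ with $R_s=1$, $u=0$ and $v<0$. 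Since $m_j$ is increasing, the left-hand side is at most $\frac{\log 2}{2}$, while that right-hand side grows linearly in $|\Delta|$. So this route has to be dropped, as you suspected.

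Your fallback is valid, but it proves the lemma only with $16$ in place of $8$. It gives $\frac{\Delta^2}{8}\dot\mu_j(u)\int_0^1 te^{-ct}\,dt$ with $c=R_s|\Delta|/2$. Since $\int_0^1 te^{-ct}\,dt\to\frac12$ as $c\to0$, the best you can reach for small $|\Delta|$ is $\dot\mu_j(u)\Delta^2/16$. There is no slack to absorb this factor. As $v\to u$ both sides of the lemma behave like $\dot\mu_j(u)\Delta^2/8$, with exact equality for $m_j(x)=x^2/2$ and $R_s=0$, so the squared denominator does not help.

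The factor $2$ is lost exactly when you discard $\dot\mu_j(w-s\Delta/2)$, which carries half of the remainder when $\dot\mu_j$ is nearly constant. Keep that term and bound it pointwise from $u$ as well; its distance from $u$ is $(1+s)|\Delta|/2$. Then
\[
\int_0^1(1-s)\bigl[e^{-c(1-s)}+e^{-c(1+s)}\bigr]\,ds=\int_0^2\min(t,2-t)\,e^{-ct}\,dt=\Bigl(\frac{1-e^{-c}}{c}\Bigr)^2
\]
(interpreted as $1$ at $c=0$), and $1-e^{-c}\ge c/(1+c)$ yields exactly $\frac{\dot\mu_j(u)\Delta^2}{8(1+c)^2}$.

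This is the paper's proof in different coordinates. The paper writes the second difference with the tent kernel $\frac12\min(s,1-s)$ along the segment from $u$ to $v$. It applies $m_j''(u+s(v-u))\ge\dot\mu_j(u)e^{-sR_s|u-v|}$ on the whole segment, obtains $\frac{\dot\mu_j(u)}{2R_s^2}\bigl(1-e^{-R_s|u-v|/2}\bigr)^2$, and finishes with $e^{-x/2}\le(1+x/2)^{-1}$.
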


\begin{proof}
For $s\in[0,1]$, self-concordance gives
\[
m_j''(u+s(v-u))\ge\dot\mu_j(u)e^{-sR_s|u-v|}.
\]
Taylor's formula therefore gives
\begin{align*}
&\frac{m_j(u)+m_j(v)}2-m_j\left(\frac{u+v}2\right)\\
&\quad=(u-v)^2\biggl(
\int_0^{1/2}\frac{s}{2}m_j''(u+s(v-u))\,ds
\\&\qquad\quad+
\int_{1/2}^1\frac{1-s}{2}m_j''(u+s(v-u))\,ds\biggr)\\
&\quad\ge\frac{\dot\mu_j(u)}{2R_s^2}
\left(1-e^{-R_s|u-v|/2}\right)^2\\
&\quad\ge\frac{\dot\mu_j(u)(u-v)^2}{8(1+R_s|u-v|/2)^2}.
\end{align*}
The last step uses $e^{-x/2}\le(1+x/2)^{-1}$.
\end{proof}

\begin{lemma}\label{lem:fisher}
Let
\[
\bm{F}_t^q=\sum_{s=1}^t\sum_{i=1}^{q_s}
\nu_{\mathrm{d}}(\tilde {\bm{d}}_{s,i}^\top\ts)
\tilde {\bm{d}}_{s,i}\tilde {\bm{d}}_{s,i}^\top.
\]
For every $t\le T$,
\[
\mathbb P\left(\bm{F}_t^q\not\succeq\frac{\sigma b(t)}2\bm{P}_U\right)
\le r\exp\left(-\frac{a_{\mathrm{d}}\sigma b(t)}{32q_{\max}R_{\dot\mu}}\right).
\]
Consequently, with probability at least $1-\delta/2$,
\begin{gather*}
\bm{F}_t^q\succeq\frac{\sigma b(t)}2\bm{P}_U
\quad\text{for all }t\le T\text{ with }\\
b(t)\ge\frac{32q_{\max}R_{\dot\mu}}{a_{\mathrm{d}}\sigma}
\log\frac{2rT}{\delta}.
\end{gather*}
\end{lemma}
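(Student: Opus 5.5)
We prove the first claim with a matrix martingale concentration inequality: the matrix Freedman inequality, or equivalently Tropp's matrix Chernoff bound for adapted sequences. We work inside the $r$-dimensional subspace $U$. Fix an orthonormal basis $\bm{B}\in\mathbb R^{d\times r}$ of $U$, so $\bm{P}_U=\bm{B}\bm{B}^\top$. Every $\tilde{\bm d}_{s,i}$ lies in $\mathcal Q\subseteq U$, so $\bm F_t^q=\bm B\bm B^\top\bm F_t^q\bm B\bm B^\top$. The event $\bm F_t^q\succeq\frac{\sigma b(t)}2\bm P_U$ is therefore equivalent to $\lambda_{\min}(\bm B^\top\bm F_t^q\bm B)\ge\sigma b(t)/2$ for the $r\times r$ matrix $\bm B^\top\bm F_t^q\bm B$. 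This reduction produces the dimensional prefactor $r$ rather than $d$.

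Group the summands by round. Define
\[
\bm Y_s=\sum_{i=1}^{q_s}\nu_{\rm d}(\tilde{\bm d}_{s,i}^\top\ts)\bm B^\top\tilde{\bm d}_{s,i}\tilde{\bm d}_{s,i}^\top\bm B .
\]
Each $\bm Y_s$ is positive semidefinite and $\mathcal F_s$-measurable. Its norm is bounded almost surely by $q_s\cdot\frac{R_{\dot\mu}}{a_{\rm d}}\cdot 4\le 4q_{\max}R_{\dot\mu}/a_{\rm d}\eqqcolon L$, using $\|\tilde{\bm d}\|_2\le2$ and $\nu_{\rm d}\le R_{\dot\mu}/a_{\rm d}$. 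By \Cref{lem:information}, $\mathbb E[\bm Y_s\mid\mathcal F_{s-1}]\succeq\sigma q_s\bm I_r$. One subtlety: the queries within a round are sequential. Lemma~\ref{lem:information} is stated conditionally on $\mathcal F_{t-1}$, and its proof gives the per-query bound conditional on $\mathcal H$, so we could alternatively index by individual queries with $L=4R_{\dot\mu}/a_{\rm d}$. Either indexing works, and the round-level one matches the $q_{\max}$ in the stated exponent.

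Now apply the lower-tail matrix Chernoff bound for adapted PSD sequences with $\|\bm Y_s\|\le L$ and $\lambda_{\min}\bigl(\sum_{s\le t}\mathbb E[\bm Y_s\mid\mathcal F_{s-1}]\bigr)\ge\sigma b(t)\eqqcolon\mu_{\min}$. Take the version based on Freedman or on Tropp's adapted Chernoff inequality, which requires only a predictable lower bound on the compensator; here the bound is deterministic, which is what makes it applicable. This gives
\[
\mathbb P\bigl(\lambda_{\min}(\textstyle\sum_s\bm Y_s)\le(1-\epsilon)\mu_{\min}\bigr)\le r\exp(-\epsilon^2\mu_{\min}/(2L)).
\]
With $\epsilon=1/2$ the exponent is $\sigma b(t)/(8L)=a_{\rm d}\sigma b(t)/(32q_{\max}R_{\dot\mu})$, exactly the claimed bound.

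The main obstacle is justifying the adapted (non-independent) matrix Chernoff bound. The predictable compensator is random, but it dominates the deterministic matrix $\sigma b(t)\bm I_r$. To handle this we would use Tropp's supermartingale argument with $\operatorname{tr}\exp(-\theta\sum_s(\bm Y_s-g(\theta)\mathbb E_{s-1}\bm Y_s))$ and Lieb's concavity. An alternative is the scalar-reduction trick: apply Freedman to $\bm Y_s-\mathbb E_{s-1}\bm Y_s$, with predictable variance bounded by $L\sum_s\mathbb E_{s-1}\bm Y_s$. Tropp's Theorem on matrix martingales (Freedman form) handles both routes.

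For the consequence, we choose a threshold that makes each failure probability small. If $b(t)\ge\frac{32q_{\max}R_{\dot\mu}}{a_{\rm d}\sigma}\log\frac{2rT}{\delta}$, the failure probability at time $t$ is at most $r\cdot\frac{\delta}{2rT}=\frac{\delta}{2T}$. A union bound over the at most $T$ values of $t$ then gives total failure probability at most $\delta/2$.
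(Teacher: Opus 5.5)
Your proposal is correct and follows essentially the same route as the paper. Both restrict to the $r$-dimensional subspace $U$ and use round-level PSD increments bounded by $4q_{\max}R_{\dot\mu}/a_{\rm d}$, with conditional means $\succeq\sigma q_s\bm{I}_U$ from Lemma~\ref{lem:information}; both then apply Tropp's adapted matrix Chernoff inequality with deterministic compensator lower bound $\sigma b(t)$ and relative deviation $1/2$, and finish with a union bound over $t\le T$.
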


\begin{proof}
Set $\bm{F}_0^q=\bm{0}$ and restrict matrices to $U$, an $r$-dimensional space.
The round increments $\bm{X}_s=(\bm{F}_s^q-\bm{F}_{s-1}^q)|_U$ satisfy
\[
\bm{0}\preceq \bm{X}_s\preceq\frac{4q_{\max}R_{\dot\mu}}{a_{\mathrm{d}}}\bm{I}_U,
\qquad
\mathbb E[\bm{X}_s\mid\mathcal F_{s-1}]\succeq\sigma q_s\bm{I}_U,
\]
by $\|\tilde {\bm{d}}_{s,i}\|_2\le2$, $\nu_{\mathrm{d}}\le R_{\dot\mu}/a_{\mathrm{d}}$,
and Lemma~\ref{lem:information}.
Hence the conditional expectations sum $\sum_{s=1}^t
\mathbb E[\bm{X}_s\mid\mathcal F_{s-1}]$ is at least $\sigma b(t)\bm{I}_U$.
Then by adapted matrix Chernoff inequality
\citep[Theorem~3.1]{tropp2011user}, with relative
deviation $1/2$ and the bounds above, we have
\[
\mathbb P\left(\bm{F}_t^q\not\succeq\frac{\sigma b(t)}2\bm{P}_U\right)
\le r\exp\left(-\frac{a_{\mathrm{d}}\sigma b(t)}{32q_{\max}R_{\dot\mu}}\right).
\]
Here all query directions lie in $U$, so the restriction controls exactly the claimed lower bound. 
Under the stated threshold, the right-hand side is at most $\delta/(2T)$. A union bound on $T$ proves the simultaneous conclusion.
\end{proof}

Let $\mathcal E$ be the intersection of the simultaneous events in Lemmas~\ref{lem:confidence} and~\ref{lem:fisher},
so that $\mathbb P(\mathcal E)\ge1-\delta$.

\begin{lemma}\label{lem:localization}
On $\mathcal E$, for every $t_0<t\le T$,
\[
\sup_{\bm{\theta}\in\mathcal C_t}
\sup_{\substack{\bm{v}\in U\\\|\bm{v}\|_2\le2}}
|\bm{v}^\top(\bm{\theta}-\ts)|\le\varepsilon.
\]
\end{lemma}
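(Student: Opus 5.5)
Fix $t$ with $t_0<t\le T$ and work on $\mathcal E$. Since $t>t_0$ and $b$ is nondecreasing, $b(t)\ge n_0$. In particular $b(t)$ exceeds the threshold of Lemma~\ref{lem:fisher}, so on $\mathcal E$
\[
\bm{F}_t^q\succeq\frac{\sigma b(t)}{2}\bm{P}_U,
\qquad\text{and}\qquad
\frac{\sigma b(t)}{2}\ge 32\Bigl(1+\max_{s\le T}\beta_s^2\Bigr)\Bigl(\frac1\varepsilon+\frac{R_s}{2}\Bigr)^2 .
\]
Also $\ts\in\mathcal C_t$ by Lemma~\ref{lem:confidence}. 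Take any $\bm{\theta}\in\mathcal C_t$. Since both $\ts$ and $\bm{\theta}$ lie in $\mathcal C_t$,
\[
\mathcal L_t(\bm{\theta})-\mathcal L_t(\ts)\le \beta_t^2,
\qquad
\mathcal L_t(\ts)-\mathcal L_t(\hat{\bm{\theta}}_t)\le\beta_t^2 .
\]
The plan is a midpoint argument built on Lemma~\ref{lem:jensen}. Let $\bm{m}=(\bm{\theta}+\ts)/2\in\Theta$. The linear parts of $\mathcal L_t$ cancel in the midpoint combination, so
\[
\frac{\mathcal L_t(\bm{\theta})+\mathcal L_t(\ts)}{2}-\mathcal L_t(\bm{m})
=\sum_n \frac1{a_{j_n}}\Bigl[\frac{m_{j_n}(u_n)+m_{j_n}(v_n)}2-m_{j_n}\Bigl(\frac{u_n+v_n}2\Bigr)\Bigr],
\]
where $u_n=\bm{z}_n^\top\ts$ and $v_n=\bm{z}_n^\top\bm{\theta}$. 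The left side is at most $\frac{\mathcal L_t(\bm{\theta})+\mathcal L_t(\ts)}{2}-\mathcal L_t(\hat{\bm{\theta}}_t)\le\frac{3}{2}\beta_t^2$, up to the constant bookkeeping. Every term on the right is nonnegative by convexity, so I will keep only the key-term comparison terms and apply Lemma~\ref{lem:jensen} with the expansion point at $\ts$. This gives
\[
\frac{3}{2}\beta_t^2\ \ge\ \sum_{s\le t}\sum_i \frac{\nu_{\rm d}(\tilde{\bm{d}}_{s,i}^\top\ts)\,(\tilde{\bm{d}}_{s,i}^\top\bm{\Delta})^2}{8\bigl(1+R_s|\tilde{\bm{d}}_{s,i}^\top\bm{\Delta}|/2\bigr)^2},
\qquad \bm{\Delta}=\bm{\theta}-\ts .
\]
Because the curvature is evaluated at the true parameter, this weighted sum is governed by the matrix $\bm{F}_t^q$, which is why the Jensen lemma is anchored at $\ts$.

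The main obstacle is the denominator. A priori $|\tilde{\bm{d}}^\top\bm{\Delta}|$ can be as large as $4S$, and using that crude bound would reintroduce an $S$-dependent constant not present in $n_0$. I would remove it with a \emph{convexity/rescaling trick}. Suppose the claim fails, so that $\sup_{\bm{v}}|\bm{v}^\top\bm{\Delta}|>\varepsilon$. Because $\mathcal C_t$ is convex and contains $\ts$, we can shrink $\bm{\theta}$ toward $\ts$ and obtain a point $\bm{\theta}'\in\mathcal C_t$ with $\max_{\bm{v}\in U,\|\bm{v}\|_2\le2}|\bm{v}^\top(\bm{\theta}'-\ts)|=\varepsilon$ exactly. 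For this point, every query direction satisfies $|\tilde{\bm{d}}^\top\bm{\Delta}'|\le\varepsilon$, since all $\tilde{\bm{d}}_{s,i}\in U$ with norm at most $2$. Hence each denominator is at most $8(1+R_s\varepsilon/2)^2$, and the right side is at least
\[
\frac{\bm{\Delta}'^\top \bm{F}_t^q\bm{\Delta}'}{8(1+R_s\varepsilon/2)^2}
\ \ge\ \frac{\sigma b(t)}{16(1+R_s\varepsilon/2)^2}\,\|\bm{P}_U\bm{\Delta}'\|_2^2 .
\]

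To finish, note that the maximizing $\bm{v}$ satisfies $|\bm{v}^\top\bm{\Delta}'|\le2\|\bm{P}_U\bm{\Delta}'\|_2$, so $\|\bm{P}_U\bm{\Delta}'\|_2\ge\varepsilon/2$. Combining,
\[
\frac32\beta_t^2\ \ge\ \frac{\sigma b(t)\,\varepsilon^2}{64(1+R_s\varepsilon/2)^2}
=\frac{\sigma b(t)}{64}\Bigl(\frac1\varepsilon+\frac{R_s}{2}\Bigr)^{-2}.
\]
By the choice of $n_0$, the right side strictly exceeds $\frac32(1+\max_s\beta_s^2)$, which is a contradiction. I expect the only delicate points to be the exact constant accounting (the factor $4/\sigma\cdot16$ in $n_0$ is meant to absorb the $64$ and the $3/2$) and checking that $\bm{m}$ and $\bm{\theta}'$ stay in $\Theta$. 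Both follow from the convexity of $\Theta$ and of $\mathcal C_t$.
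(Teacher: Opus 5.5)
Your argument has the same skeleton as the paper's: the midpoint inequality, dropping the nonnegative arm terms, Lemma~\ref{lem:jensen} anchored at $\ts$, and Lemma~\ref{lem:fisher} to lower-bound the quadratic form by $\frac{\sigma b(t)}{2}\|\bm{P}_U\bm{\Delta}\|_2^2$.

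There is a constant slip that breaks your last step as written. You get $\frac32\beta_t^2$ for the midpoint gap by chaining $\mathcal L_t(\bm\theta)-\mathcal L_t(\ts)\le\beta_t^2$ with $\mathcal L_t(\ts)-\mathcal L_t(\hat{\bm\theta}_t)\le\beta_t^2$. The choice of $n_0$ does not absorb this $\frac32$, since $\frac4\sigma\cdot16=\frac{64}{\sigma}$ cancels only the $64$. So $b(t)\ge n_0$ gives only
\[
\frac{\sigma b(t)}{64}\Bigl(\frac1\varepsilon+\frac{R_s}{2}\Bigr)^{-2}\ge1+\max_{s\le T}\beta_s^2,
\]
not a strict excess over $\frac32(1+\max_{s\le T}\beta_s^2)$. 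The inequality $\frac32\beta_t^2\ge1+\max_{s\le T}\beta_s^2$ is then no contradiction, for instance when $\beta_t^2=\max_{s\le T}\beta_s^2\ge2$. That is the typical regime, since $\beta_t^2\ge\log(2/\delta)$.

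The fix is immediate. Both $\bm\theta'$ and $\ts$ lie in $\mathcal C_t$, so each satisfies $\mathcal L_t(\cdot)-\mathcal L_t(\hat{\bm\theta}_t)\le\beta_t^2$. Averaging bounds the midpoint gap by $\beta_t^2<1+\max_{s\le T}\beta_s^2$, and the ``$+1$'' gives the strictness you need. This is exactly how the paper obtains its bound of $\beta_t^2$.

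The one genuine difference is how the denominator is handled, and the paper needs no rescaling. Every $\tilde{\bm d}_{s,i}$ lies in $U$ with $\|\tilde{\bm d}_{s,i}\|_2\le2$, so the paper bounds $|\tilde{\bm d}_{s,i}^\top\bm\Delta|\le u:=2\|\bm P_U\bm\Delta\|_2$. This is the very quantity being controlled, so your $4S$ worry never arises. The paper then gets $\beta_t^2\ge\frac{\sigma b(t)}{64}\frac{u^2}{(1+R_su/2)^2}$ and concludes $u\le\varepsilon$ by inverting the strictly increasing map $u\mapsto u/(1+R_su/2)$.

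Your alternative is also valid and reaches the same conclusion. You shrink $\bm\theta$ toward $\ts$ inside $\mathcal C_t$ until $2\|\bm P_U\bm\Delta'\|_2=\varepsilon$, then argue by contradiction. This is legitimate because $\mathcal L_t$ is convex and $\Theta$ is a ball, so $\mathcal C_t$ is convex. It trades the monotone-inversion step for convexity of the confidence set plus a contradiction, which is slightly more roundabout.
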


\begin{proof}
Fix $t>t_0$ and $\bm{\theta}\in\mathcal C_t$ on $\mathcal E$.
Monotonicity of $b$ gives $b(t)\ge n_0$, so the bound in Lemma~\ref{lem:fisher} applies.
Since $\bm{\theta},\ts\in\mathcal C_t$ and their midpoint lies in
$\Theta$, optimality of $\hat{\bm{\theta}}_t$ gives
\[
\frac{\mathcal L_t(\bm{\theta})+\mathcal L_t(\ts)}2
-\mathcal L_t\left(\frac{\bm{\theta}+\ts}2\right)
\le\beta_t^2.
\]
The linear terms in the losses cancel. The remaining terms are
nonnegative by convexity of $m_{\mathrm{c}}$ and $m_{\mathrm{d}}$. Keeping the query terms gives
\begin{equation*}
 \frac1{a_{\mathrm{d}}}\sum_{s\le t}\sum_{i=1}^{q_s}\left[
 \frac{m_{\mathrm{d}}(\tilde {\bm{d}}_{s,i}^\top\bm{\theta})
       +m_{\mathrm{d}}(\tilde {\bm{d}}_{s,i}^\top\ts)}2
 -m_{\mathrm{d}}\left(\frac{\tilde {\bm{d}}_{s,i}^\top(\bm{\theta}+\ts)}2\right)
 \right]\le\beta_t^2.
\end{equation*}
Every query satisfies
$|\tilde {\bm{d}}_{s,i}^\top(\bm{\theta}-\ts)|\le u$,
where $u=2\|\bm{P}_U(\bm{\theta}-\ts)\|_2$.
By Lemmas~\ref{lem:jensen} and~\ref{lem:fisher},
\begin{equation*}
 \beta_t^2
 \ge\frac{(\bm{\theta}-\ts)^\top \bm{F}_t^q(\bm{\theta}-\ts)}
 {8(1+R_su/2)^2}
 \ge\frac{\sigma b(t)}{64}
       \frac{u^2}{(1+R_su/2)^2}.
\end{equation*}
The definition of $n_0$ gives
\[
 b(t)\ge n_0
 \ge\frac{64(1+\max_{s\le T}\beta_s^2)}{\sigma}
       \left(\frac1\varepsilon+\frac{R_s}2\right)^2.
\]
Combining these bounds,
and using $\beta_t^2\le1+\max_{s\le T}\beta_s^2$, yields
\[
 \frac{u}{1+R_su/2}\le\frac{\varepsilon}{1+R_s\varepsilon/2}.
\]
The function $u/(1+R_su/2)$ is increasing for $u\ge0$, so
$u\le\varepsilon$. For every $\bm{v}\in U$ with $\|\bm{v}\|_2\le2$,
\[
 |\bm{v}^\top(\bm{\theta}-\ts)|
 =|\bm{v}^\top \bm{P}_U(\bm{\theta}-\ts)|
 \le2\|\bm{P}_U(\bm{\theta}-\ts)\|_2
 =u\le\varepsilon.
\]
\end{proof}

%% file: appendix/2-regret-analysis-theorem.tex
\section{The Regret Analysis}
\label{app:regret}

\begin{proof}[Proof of~\Cref{thm:regret}]
Work on $\mathcal E$, which is defined in Section~\ref{sec:statistical_grarantees}. Following
\citet[Appendix~E]{lee2024a}, choose
\begin{equation}\label{eq:future-parameter}
\bar{\bm{\theta}}_t\in
\argmin_{\bm{\theta}\in\bigcup_{\ell=t}^T\mathcal C_\ell}
\dot\mu_{\mathrm{c}}(\bm{x}_t^\top\bm{\theta}).
\end{equation}
Let $\Delta_t=\mu_{\mathrm{c}}((\bm{x}_{t,\star})^\top\ts)
-\mu_{\mathrm{c}}(\bm{x}_t^\top\ts)$, and define
\[
\mathcal T=\left\{t_0<t\le T:
\|\bm{x}_t\|_{\bm{V}_t^{-1}}\le\frac1{4G_T}\right\},
\qquad R_{\mathcal T}=\sum_{t\in\mathcal T}\Delta_t.
\]
Lemma~\ref{lem:big_round_num} and $\Delta_t\le2R_{\mu,\star}$ give
\begin{equation}\label{eq:proof-outside-rounds}
R(T)\le R_{\mathcal T}+2R_{\mu,\star}t_0
+\frac{32R_{\mu,\star}a_{\mathrm{c}}\kappa G_T^2}{\log2}
D_T(1/(a_{\mathrm{c}}\kappa)).
\end{equation}

By the confidence ellipsoid and UCB selection,
$(\bm{x}_{t,\star})^\top\ts
\le \bm{x}_t^\top\hat{\bm{\theta}}_t+\gamma_t\|\bm{x}_t\|_{\bm{H}_t^{-1}}$.
Thus Lemma~\ref{lem:H_to_V} gives, for $t>t_0$,
\[
0\le(\bm{x}_{t,\star}-\bm{x}_t)^\top\ts
\le2\gamma_t\|\bm{x}_t\|_{\bm{H}_t^{-1}}
\le2\sqrt2\bar\gamma_T\|\bm{x}_t\|_{\bm{V}_t^{-1}}.
\]
On $\mathcal T$, we have
\begin{equation*}
 R_s(\bm{x}_{t,\star}-\bm{x}_t)^\top\ts
 \le2\sqrt2R_s\bar\gamma_T\|\bm{x}_t\|_{\bm{V}_t^{-1}}
 \le\frac1{\sqrt2}\leq\frac{3}{4}.
\end{equation*}
Integrating Lemma~\ref{lem:dot_mu_to_exp} over this gap yields
\[
\Delta_t
=\int_{\bm{x}_t^\top\ts}^{(\bm{x}_{t,\star})^\top\ts}
\dot\mu_{\mathrm{c}}(u)\,du
\le2\sqrt2e^{3/4}\bar\gamma_T
\dot\mu_{\mathrm{c}}(\bm{x}_t^\top\ts)\|\bm{x}_t\|_{\bm{V}_t^{-1}}.
\]

For $t\in\mathcal T$, choose $\ell_t\ge t$ with
$\bar{\bm{\theta}}_t\in\mathcal C_{\ell_t}$.
Both $\bar{\bm{\theta}}_t$ and $\ts$ belong to this confidence set,
so $\|\bar{\bm{\theta}}_t-\ts\|_{\bm{H}_{\ell_t}}\le2\gamma_{\ell_t}$.
Since $\bm{V}_{\ell_t}\succeq \bm{V}_t$, Lemma~\ref{lem:H_to_V} implies
\[
|\bm{x}_t^\top(\bar{\bm{\theta}}_t-\ts)|
\le2\gamma_{\ell_t}\|\bm{x}_t\|_{\bm{H}_{\ell_t}^{-1}}
\le2\sqrt2\bar\gamma_T\|\bm{x}_t\|_{\bm{V}_t^{-1}}
\le\frac{\bar\gamma_T}{\sqrt2G_T}.
\]
Using $R_s\bar\gamma_T\le G_T$ and
Lemma~\ref{lem:dot_mu_to_exp}, we obtain
\[
\dot\mu_{\mathrm{c}}(\bm{x}_t^\top\bar{\bm{\theta}}_t)
\ge e^{-3/4}\dot\mu_{\mathrm{c}}(\bm{x}_t^\top\ts).
\]
Also, self-concordance gives, for $z\le z^\star$,
\[
\dot\mu_{\mathrm{c}}(z)-\dot\mu_{\mathrm{c}}(z^\star)
\le R_s\int_z^{z^\star}\dot\mu_{\mathrm{c}}(u)\,du
=R_s\{\mu_{\mathrm{c}}(z^\star)-\mu_{\mathrm{c}}(z)\}.
\]
Consequently,
\begin{align*}
a_{\mathrm{c}}\sum_{t\in\mathcal T}\nu_{\mathrm{c}}(\bm{x}_t^\top\bar{\bm{\theta}}_t)
&\le\sum_{t\in\mathcal T}\dot\mu_{\mathrm{c}}(\bm{x}_t^\top\ts)\\
&\le\sum_{t\in\mathcal T}
\dot\mu_{\mathrm{c}}((\bm{x}_{t,\star})^\top\ts)+R_sR_{\mathcal T}
\\&\le T/\kappa_\star+R_sR_{\mathcal T}.
\end{align*}

Combining these bounds and applying Cauchy--Schwarz gives
\begin{align*}
R_{\mathcal T}
&\le2\sqrt2e^{3/2}a_{\mathrm{c}}\bar\gamma_T
\sum_{t\in\mathcal T}\nu_{\mathrm{c}}(\bm{x}_t^\top\bar{\bm{\theta}}_t)
\|\bm{x}_t\|_{\bm{V}_t^{-1}}\\
&\le2\sqrt2e^{3/2}\sqrt{a_{\mathrm{c}}}\bar\gamma_T
\sqrt{T/\kappa_\star+R_sR_{\mathcal T}}\sqrt{\sum_{t\in\mathcal T}\nu_{\mathrm{c}}(\bm{x}_t^\top\bar{\bm{\theta}}_t)
\|\bm{x}_t\|_{\bm{V}_t^{-1}}^2}\\
&\le4e^{3/2}\sqrt{a_{\mathrm{c}}}\bar\gamma_T
\sqrt{D_T(\bar v/a_{\mathrm{c}})(T/\kappa_\star+R_sR_{\mathcal T})}.
\end{align*}
The last step uses Theorem~\ref{lem:potential} with $h=\bar v/a_{\mathrm{c}}$
and $u\le2\log(1+u)$ for $0\le u\le1$;
indeed, on $\mathcal T$,
\[
 u=\nu_{\mathrm{c}}(\bm{x}_t^\top\bar{\bm{\theta}}_t)\|\bm{x}_t\|_{\bm{V}_t^{-1}}^2
 \le\frac{R_{\dot\mu}/a_{\mathrm{c}}}{16G_T^2}\le\frac1{16}.
\]
Applying $\sqrt{xy}\leq\frac{x+y}{2}$ yields
\[
R_{\mathcal T}
\le8e^{3/2}\sqrt{a_{\mathrm{c}}}\bar\gamma_T
\sqrt{D_T(\bar v/a_{\mathrm{c}})T/\kappa_\star}
+16e^3a_{\mathrm{c}}R_s\bar\gamma_T^2D_T(\bar v/a_{\mathrm{c}}).
\]
Combining this with~Eq.~\eqref{eq:proof-outside-rounds}
proves~Eq.~\eqref{eq:main_bound}.
\end{proof}

\begin{proof}[Proof of Lemma~\ref{lem:H_to_V}]
For $s<t$, the definition of $\bar{\bm{\theta}}_s$ and
$\hat{\bm{\theta}}_t\in\mathcal C_t$ give
$\nu_{\mathrm{c}}(\bm{x}_s^\top\hat{\bm{\theta}}_t)
\ge\nu_{\mathrm{c}}(\bm{x}_s^\top\bar{\bm{\theta}}_s)$.
For $t>t_0$, Lemma~\ref{lem:localization} gives
$|\tilde {\bm{d}}_{s,i}^\top(\hat{\bm{\theta}}_t-\ts)|
\le\varepsilon$. So, by Lemma~\ref{lem:dot_mu_to_exp},
\begin{equation*}
 \nu_{\mathrm{d}}(\tilde {\bm{d}}_{s,i}^\top\hat{\bm{\theta}}_t)
 \ge e^{-R_s\varepsilon}
       \nu_{\mathrm{d}}(\tilde {\bm{d}}_{s,i}^\top\ts).
\end{equation*}
Hence, by Lemma~\ref{lem:fisher},
\begin{align*}
\bm{H}_t&\succeq\lambda \bm{I}_d+
\sum_{s<t}\nu_{\mathrm{c}}(\bm{x}_s^\top\bar{\bm{\theta}}_s)\bm{x}_s\bm{x}_s^\top
+e^{-R_s\varepsilon}\bm{F}_t^q\\
&\succeq\lambda \bm{I}_d+
\sum_{s<t}\nu_{\mathrm{c}}(\bm{x}_s^\top\bar{\bm{\theta}}_s)\bm{x}_s\bm{x}_s^\top
+\frac{e^{-R_s\varepsilon}\sigma b(t)}2\bm{P}_U
\succeq\frac12\bm{V}_t.
\end{align*}
\end{proof}

\begin{proof}[Proof of~\Cref{lem:potential}]
We prove the general situation for arbitrary $w_t\in[0,h]$ and
$\bm{V}_t=2\lambda \bm{I}_d+\sum_{s<t}w_s\bm{x}_s\bm{x}_s^\top+\frac{e^{-R_s\varepsilon}\sigma}{2}b(t)\bm{P}_U$.
Set $\bm{J}_0=2\lambda \bm{I}_d$ and $\bm{J}_t=\bm{V}_t+w_t\bm{x}_t\bm{x}_t^\top$.
By the determinant lemma and telescoping, we can get
\begin{align*}
\Phi_T&:=\sum_{t=1}^T\log(1+w_t\|\bm{x}_t\|_{\bm{V}_t^{-1}}^2)\\
&=\log\frac{\det \bm{J}_T}{\det \bm{J}_0}
-\sum_{t=1}^T\log\frac{\det \bm{V}_t}{\det \bm{J}_{t-1}}.
\end{align*}

Let $\bm{Q}_U,\bm{Q}_\perp$ have orthonormal columns spanning
$U,U^\perp$, respectively. Define
$\bm{\mathsf B}_t=\bm{Q}_U^\top \bm{J}_t\bm{Q}_U$ and
$\bm{\mathsf F}_t=\bm{Q}_\perp^\top \bm{J}_t\bm{Q}_\perp$.
Schur complements give
\[
\det \bm{J}_t\le\det\bm{\mathsf B}_t\det\bm{\mathsf F}_t,
\qquad
\bm{Q}_U^\top \bm{J}_t^{-1}\bm{Q}_U\succeq\bm{\mathsf B}_t^{-1}.
\]
When $r=d$, we have \(
\det \bm{J}_t=\det\bm{\mathsf B}_t
\) and \(
\bm{Q}_U^\top \bm{J}_t^{-1}\bm{Q}_U=\bm{\mathsf B}_t^{-1}\).
For the ease of telescoping later, we set $z_t=2\lambda r+th+\frac{e^{-R_s\varepsilon}\sigma}{2}rb(t)$. Then
\begin{align*}
\tr\bm{\mathsf B}_t
&=2\lambda r+\sum_{s\le t}w_s\|\bm{Q}_U^\top \bm{x}_s\|_2^2+\frac{e^{-R_s\varepsilon}\sigma}{2}rb(t)
\le z_t,\\
\tr\bm{\mathsf F}_t&\le2\lambda (d-r)+th\qquad(r<d).
\end{align*}
Let $D_\perp=(d-r)\log(1+Th/(2\lambda (d-r)))$ for $r<d$,
and $D_\perp=0$ for $r=d$.
By \(\det \bm{A}\le\left(\frac{\operatorname{tr}\bm{A}}{m}\right)^m
\), we have
\[
\log\frac{\det \bm{J}_T}{\det \bm{J}_0}
\le D_\perp+r\log\frac{z_T}{z_0}.
\]

Since $\bm{V}_t=\bm{J}_{t-1}+\frac{e^{-R_s\varepsilon}\sigma}{2}q_t\bm{P}_U$, the determinant lemma gives
\begin{align*}
\log\frac{\det \bm{V}_t}{\det \bm{J}_{t-1}}
&=\log\det\left(\bm{I}_r+\frac{e^{-R_s\varepsilon}\sigma}{2}q_t\bm{Q}_U^\top \bm{J}_{t-1}^{-1}\bm{Q}_U\right)\\
&\ge\log\det\left(\bm{I}_r+\frac{e^{-R_s\varepsilon}\sigma}{2}q_t\bm{\mathsf B}_{t-1}^{-1}\right)\\
&\ge r\log\left(1+\frac{e^{-R_s\varepsilon}\sigma rq_t}{2\tr\bm{\mathsf B}_{t-1}}\right)
\ge r\log\left(1+\frac{e^{-R_s\varepsilon}\sigma rq_t}{2z_{t-1}}\right).
\end{align*}
The second inequality holds by the convexity of
$f(x)=\log(1+cq_t/x)$ on $(0,\infty)$.
Finally, $z_t=z_{t-1}+\frac{e^{-R_s\varepsilon}\sigma}{2}rq_t+h$ implies
\begin{align*}
\Phi_T
&\le D_\perp+r\sum_{t=1}^T
\log\frac{z_t}{z_{t-1}+\frac{e^{-R_s\varepsilon}\sigma}{2}rq_t}\\
&=D_\perp+r\sum_{t=1}^T
\log\left(1+\frac{h}
{2\lambda r+(t-1)h+\frac{e^{-R_s\varepsilon}\sigma}{2}rb(t)}\right)
\\&=D_T(h).
\end{align*}
Taking $w_t=\nu_{\mathrm{c}}(\bm{x}_t^\top\bar{\bm{\theta}}_t)$ proves the stated case.
\end{proof}

%% file: appendix/3-regret-analysis-lemma.tex
\begin{lemma}\label{lem:dot_mu_to_exp}
For $j\in\{\mathrm{c},\mathrm{d}\}$ and all $a,b$,
\[
e^{-R_s|a-b|}\dot\mu_j(b)\le\dot\mu_j(a)
\le e^{R_s|a-b|}\dot\mu_j(b).
\]
The same inequalities hold for $\nu_j=\dot\mu_j/a_j$.
\end{lemma}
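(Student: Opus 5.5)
The plan is to integrate the self-concordance condition. Fix $j\in\{\mathrm{c},\mathrm{d}\}$. By assumption $\dot\mu_j>0$ everywhere and $\mu_j$ is twice continuously differentiable, since $m_j$ is three times continuously differentiable. Hence $g(u)\coloneqq\log\dot\mu_j(u)$ is well defined and continuously differentiable, with
\[
g'(u)=\frac{\ddot\mu_j(u)}{\dot\mu_j(u)}.
\]
The self-concordance bound $|\ddot\mu_j(u)|\le R_s\dot\mu_j(u)$ then reads $|g'(u)|\le R_s$ for all $u$, so $g$ is $R_s$-Lipschitz on $\mathbb R$.

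Next I would apply the fundamental theorem of calculus (or the mean value theorem) to $g$ on the segment between $a$ and $b$. This gives
\[
|\log\dot\mu_j(a)-\log\dot\mu_j(b)|=\left|\int_b^a g'(u)\,du\right|\le R_s|a-b|.
\]
Unpacking the absolute value yields $-R_s|a-b|\le\log\dot\mu_j(a)-\log\dot\mu_j(b)\le R_s|a-b|$. Exponentiating, which is monotone, gives both claimed inequalities $e^{-R_s|a-b|}\dot\mu_j(b)\le\dot\mu_j(a)\le e^{R_s|a-b|}\dot\mu_j(b)$. The case $a=b$ is trivial.

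For $\nu_j=\dot\mu_j/a_j$, divide all three members by the positive constant $a_j$; the inequalities are preserved. There is no real obstacle here. The only point to check is that the logarithm is legitimate and differentiable, which is exactly where the positivity assumption $\dot\mu_j>0$ and the regularity of $m_j$ are used.
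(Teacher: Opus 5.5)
Your proposal is correct and follows the paper's proof: the paper also reads self-concordance as $|(\log\dot\mu_j)'|\le R_s$, integrates between $a$ and $b$, and exponentiates. Your extra remarks, that $\dot\mu_j>0$ and $m_j\in C^3$ justify taking the logarithm and differentiating, and that dividing by the positive dispersion factor $a_j$ gives the statement for $\nu_j$, only make explicit what the paper leaves implicit.
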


\begin{proof}
Self-concordance gives $|(\log\dot\mu_j)'|\le R_s$.
Integrating between $a,b$ and exponentiating proves the claim.
\end{proof}

\begin{lemma}\label{lem:big_round_num}
On $\mathcal E$,
\[
\left|\left\{t_0<t\le T:
\|\bm{x}_t\|_{\bm{V}_t^{-1}}>\frac1{4G_T}\right\}\right|
\le\frac{16a_{\mathrm{c}}\kappa G_T^2}{\log2}D_T(1/(a_{\mathrm{c}}\kappa)).
\]
\end{lemma}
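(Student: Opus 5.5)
The bound will come from comparing the big-norm rounds against an auxiliary potential with uniform weights $w_t\equiv 1/(a_{\mathrm c}\kappa)$, and then applying Theorem~\ref{lem:potential} with $h=1/(a_{\mathrm c}\kappa)$. Let $\mathcal B=\{t_0<t\le T:\|\bm{x}_t\|_{\bm{V}_t^{-1}}>1/(4G_T)\}$ be the set we want to count. For every $s$, $\bar{\bm{\theta}}_s\in\Theta$ and $\bm{x}_s\in\mathcal X_s$, so the definition of $\kappa$ gives
\[
\nu_{\mathrm c}(\bm{x}_s^\top\bar{\bm{\theta}}_s)\ge \frac{1}{a_{\mathrm c}\kappa}.
\]

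First, I would define the auxiliary matrices
\[
\bm{W}_t=2\lambda\bm{I}_d+\sum_{s<t}\frac{1}{a_{\mathrm c}\kappa}\bm{1}\{s\in\mathcal B\}\bm{x}_s\bm{x}_s^\top+\frac{e^{-R_s\varepsilon}\sigma b(t)}{2}\bm{P}_U,
\]
which use weights $w_s=\bm{1}\{s\in\mathcal B\}/(a_{\mathrm c}\kappa)\in[0,1/(a_{\mathrm c}\kappa)]$. By the lower bound on $\nu_{\mathrm c}$ above, $\bm{V}_t\succeq\bm{W}_t$. Hence for every $t\in\mathcal B$,
\[
w_t\|\bm{x}_t\|^2_{\bm{W}_t^{-1}}\ge\frac{1}{a_{\mathrm c}\kappa}\|\bm{x}_t\|^2_{\bm{V}_t^{-1}}>\frac{1}{16a_{\mathrm c}\kappa G_T^2}.
\]

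Second, I would apply the arbitrary-weight version of Theorem~\ref{lem:potential} with $h=1/(a_{\mathrm c}\kappa)$. Summing only over $t\in\mathcal B$ and dropping the nonnegative terms from $t\notin\mathcal B$ gives
\[
|\mathcal B|\log\left(1+\frac{1}{16a_{\mathrm c}\kappa G_T^2}\right)\le D_T\left(1/(a_{\mathrm c}\kappa)\right).
\]

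Third, I would convert the logarithm into a linear quantity using the elementary bound $\log(1+x)\ge x\log2$ for $x\in[0,1]$, which holds by concavity. This requires $x=1/(16a_{\mathrm c}\kappa G_T^2)\le 1$. It does: $G_T^2\ge R_{\dot\mu}/a_{\mathrm c}$, and $1/\kappa\le\dot\mu_{\mathrm c}\le R_{\dot\mu}$, so $x\le 1/16$. Rearranging then gives
\[
|\mathcal B|\le\frac{16a_{\mathrm c}\kappa G_T^2}{\log 2}D_T\left(1/(a_{\mathrm c}\kappa)\right),
\]
which is exactly the claim.

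The main obstacle is justifying that Theorem~\ref{lem:potential} may be applied with weights that depend on the event $t\in\mathcal B$, and with $\bm{W}_t$ in place of $\bm{V}_t$. The theorem is deterministic in the weights, so any sequence $w_t\in[0,h]$ is admissible; still, I would check that its telescoping proof never uses $w_t>0$ or any adaptedness. I also need to confirm that restricting to rounds $t>t_0$ causes no issue. Since every term in the potential sum is nonnegative, the rounds $t\le t_0$ can simply be assigned $w_t=0$, and the bound on the full sum still controls the partial sum over $\mathcal B$.
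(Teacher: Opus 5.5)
Your proof is correct and matches the paper's argument: you compare $\bm{V}_t$ to an auxiliary matrix whose weights are at most $1/(a_{\mathrm c}\kappa)$, apply Theorem~\ref{lem:potential} with $h=1/(a_{\mathrm c}\kappa)$, and then linearize with $\log(1+u)\ge u\log2$, using $a_{\mathrm c}\kappa G_T^2\ge\kappa R_{\dot\mu}\ge1$. The only difference is cosmetic: the paper puts the uniform weight $w_t\equiv1/(a_{\mathrm c}\kappa)$ on every round rather than your indicator weights on $\mathcal B$, and this gives the identical bound because $D_T(h)$ depends on the weights only through the cap $h$.
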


\begin{proof}
Define
\[
\tilde {\bm{V}}_t=2\lambda \bm{I}_d+
\frac1{a_{\mathrm{c}}\kappa}\sum_{s<t}\bm{x}_s\bm{x}_s^\top
+\frac{e^{-R_s\varepsilon}\sigma b(t)}2\bm{P}_U.
\]
Since $\bm{V}_t\succeq\tilde {\bm{V}}_t$, every round counted in the lemma satisfies
\[
\frac1{a_{\mathrm{c}}\kappa}\|\bm{x}_t\|_{\tilde {\bm{V}}_t^{-1}}^2
>\frac1{16a_{\mathrm{c}}\kappa G_T^2}.
\]
Theorem~\ref{lem:potential}, with $w_t=h=1/(a_{\mathrm{c}}\kappa)$, gives
\begin{align*}
&\left|\left\{t_0<t\le T:
\|\bm{x}_t\|_{\bm{V}_t^{-1}}>\frac1{4G_T}\right\}\right|
\log\left(1+\frac1{16a_{\mathrm{c}}\kappa G_T^2}\right)\\
&\quad\le\sum_{t=1}^T
\log\left(1+\frac1{a_{\mathrm{c}}\kappa}
\|\bm{x}_t\|_{\tilde {\bm{V}}_t^{-1}}^2\right)
\le D_T(1/(a_{\mathrm{c}}\kappa)).
\end{align*}
Since $a_{\mathrm{c}}\kappa G_T^2\ge\kappa R_{\dot\mu}\ge1$,
applying $\log(1+u)\ge u\log2$ for $u\in[0,1]$ proves the bound.
\end{proof}

%% file: appendix/4-additional-discussions.tex
\section{Additional Discussions}
\label{app:discussions}

\subsection{Potential Bound under Linear Query Schedules}
\label{app:linear_query}
Suppose $b(t)=\lfloor\rho t\rfloor$ with $\rho>0$ and $h>0$.
Write
\[
 z_t=2\lambda r+(t-1)(h+2e^{-R_s\varepsilon}\eta).
\]
Replacing $\lfloor\rho t\rfloor$ by $\rho t$ increases each
denominator in $D_T(h)^{(2)}$ by less than
$e^{-R_s\varepsilon}\sigma r/2$, so each new denominator is at
most $1+\frac{e^{-R_s\varepsilon}\sigma}{4\lambda}$ times its original value.
Using $\log(1+cx)\le c\log(1+x)$ for
$c\ge1$ and $x\ge0$, we obtain
\[
 D_T(h)^{(2)}
 \le\left(1+\frac{e^{-R_s\varepsilon}\sigma}{4\lambda}\right)
 r\sum_{t=1}^T\log\left(1+
 \frac{h}{z_t+2e^{-R_s\varepsilon}\eta}\right).
\]
By concavity of the logarithm,
\[
 \log\left(1+\frac{h}{z_t+2e^{-R_s\varepsilon}\eta}\right)
 \le\frac{h}{h+2e^{-R_s\varepsilon}\eta}
 \log\frac{z_{t+1}}{z_t}.
\]
Telescoping the sum yields
\[
 \sum_{t=1}^T\log\frac{z_{t+1}}{z_t}
 =\log\left(1+\frac{T(h+2e^{-R_s\varepsilon}\eta)}{2\lambda r}\right),
\]
which completes the proof.

\subsection{A Good Case}
\label{app:a_good_case}

We give a concrete setting in which pairwise conversational feedback
is much more informative than absolute arm reward.
Consider the logistic special case $\mu_{\mathrm{c}}(z)=\mu_{\mathrm{d}}(z)=1/(1+e^{-z})$, with $a_{\mathrm{c}}=a_{\mathrm{d}}=1$ and $R_s=1$. Let
\[
\bm{x}_a=\frac{1}{\sqrt{2}}(1,\bm{u}_a),
\qquad
\ts=\sqrt{2}(B,C\bm{v}),
\]
where $\bm{u}_a,\bm{v}\in\mathbb R^{d-1}$, $\|\bm{u}_a\|_2=\|\bm{v}\|_2=1$, and $B,C>0$.
Therefore, 
\begin{align*}
\|\ts\|_2&=\sqrt{2(B^2+C^2)},\qquad \|\bm{x}_a\|_2=1\\
\bm{x}_a^\top\ts&=B+C \bm{u}_a^\top \bm{v}\in[B-C,B+C].
\end{align*}
For $B\ge C$, all arm utilities are nonnegative. Since the logistic
curvature $\dot\mu_{\mathrm{c}}(z)=\mu_{\mathrm{c}}(z)(1-\mu_{\mathrm{c}}(z))$ decreases on
$[0,\infty)$, we have
\begin{equation*}\label{eq:app_arm_saturation}
\dot\mu_{\mathrm{c}}(\bm{x}_a^\top\ts)
\le \dot\mu_{\mathrm{c}}(B-C)\le e^{-(B-C)}.
\end{equation*}
Thus, the information in an absolute arm response can decay
exponentially with the common intercept $B$.

The definition of key-term vectors gives
\[
\tilde {\bm{x}}_k=\frac{1}{\sqrt{2}}(1,\tilde {\bm{u}}_k),
\qquad \|\tilde {\bm{u}}_k\|_2\le1.
\]
Suppose the relative key-term vectors $\tilde {\bm{u}}_k$ include
$\{\pm \bm{e}_j\}_{j=1}^{r}$, where $\bm{e}_j$ are the standard basis vectors
of $\mathbb R^{d-1}$. In a
pairwise comparison, the common intercept cancels out:
\begin{align*}
\bm{q}_{k,k'}
=\tilde {\bm{x}}_k-\tilde {\bm{x}}_{k'}
=\frac{1}{\sqrt{2}}(0,\tilde {\bm{u}}_k-\tilde {\bm{u}}_{k'}),\qquad
\bm{q}_{k,k'}^\top\ts
=C(\tilde {\bm{u}}_k-\tilde {\bm{u}}_{k'})^\top \bm{v}.
\end{align*}
Consequently,
\[
|\bm{q}_{k,k'}^\top\ts|\le2C,
\qquad
\dot\mu_{\mathrm{d}}(\bm{q}_{k,k'}^\top\ts)
\ge\dot\mu_{\mathrm{d}}(2C),
\]
and this lower bound is independent of $B$.

For fixed $C$ and a large $B$, we have $c_\star=\min_e\nu_{\mathrm{d}}(\bm{q}_e^\top\ts)\ge\dot\mu_{\mathrm{d}}(2C)=O(1)$, where $c_\star$ is defined in Appendix~\ref{app:algorithm_proofs}. 
Uniformly sampling the corresponding opposite pairs gives $\bm{V}(\bm{p})=2\bm{P}_U/r$, so $j_{\mathrm{geo}}\ge2/r$, where $j_{\mathrm{geo}}$ is defined in Appendix~\ref{app:algorithm_proofs}. 
Thus, for fixed $\xi>0$, Eq.~\eqref{eq:app_query_sigma} yields
\[
\sigma=\frac{\xi}{1+\xi}c_\star j_{\mathrm{geo}}\ge\frac{\xi}{1+\xi}\frac{2\dot\mu_{\mathrm{d}}(2C)}{r}
=\Omega(1/r).
\]

%% file: refs.bib
@inproceedings{xie2021comparison,
author = {Xie, Zhihui and Yu, Tong and Zhao, Canzhe and Li, Shuai},
title = {Comparison-based Conversational Recommender System with Relative Bandit Feedback},
year = {2021},
isbn = {9781450380379},
publisher = {Association for Computing Machinery},
address = {New York, NY, USA},
doi = {10.1145/3404835.3462920},
booktitle = {Proceedings of the 44th International ACM SIGIR Conference on Research and Development in Information Retrieval},
pages = {1400–1409},
numpages = {10},
location = {Virtual Event, Canada},
series = {SIGIR '21}
}

@inproceedings{lee2024a,
author = {Lee, Jungyhun and Yun, Se-Young and Jun, Kwang-Sung},
title = {A unified confidence sequence for generalized linear models, with applications to bandits},
year = {2024},
isbn = {9798331314385},
publisher = {Curran Associates Inc.},
address = {Red Hook, NY, USA},
booktitle = {Proceedings of the 38th International Conference on Neural Information Processing Systems},
articleno = {3960},
numpages = {46},
location = {Vancouver, BC, Canada},
series = {NIPS '24}
}

@inproceedings{yang2024conversational,
author = {Yang, Shuhua and Yuan, Hui and Zhang, Xiaoying and Wang, Mengdi and Zhang, Hong and Wang, Huazheng},
title = {Conversational Dueling Bandits in Generalized Linear Models},
year = {2024},
isbn = {9798400704901},
publisher = {Association for Computing Machinery},
address = {New York, NY, USA},
doi = {10.1145/3637528.3671892},
booktitle = {Proceedings of the 30th ACM SIGKDD Conference on Knowledge Discovery and Data Mining},
pages = {3806–3817},
numpages = {12},
location = {Barcelona, Spain},
series = {KDD '24}
}

@manual{tropp2011user,
  title        = {User-friendly Tail Bounds for Matrix Martingales},
  author       = {Tropp, Joel A.},
  month        = jan,
  year         = 2011,
  doi          = {10.7907/62v9-yh77},
}

@inproceedings{ liu2025leveraging,
author = {Liu, Maoli and Li, Zhuohua and Dai, Xiangxiang and Lui, John C.S.},
title = {Leveraging the Power of Conversations: Optimal Key Term Selection in Conversational Contextual Bandits},
year = {2025},
isbn = {9798400714542},
publisher = {Association for Computing Machinery},
address = {New York, NY, USA},
doi = {10.1145/3711896.3737025},
booktitle = {Proceedings of the 31st ACM SIGKDD Conference on Knowledge Discovery and Data Mining V.2},
pages = {1833–1844},
numpages = {12},
location = {Toronto ON, Canada},
series = {KDD '25}
}

@inproceedings{zhang2020conversational,
author = {Zhang, Xiaoying and Xie, Hong and Li, Hang and C.S. Lui, John},
title = {Conversational Contextual Bandit: Algorithm and Application},
year = {2020},
isbn = {9781450370233},
publisher = {Association for Computing Machinery},
address = {New York, NY, USA},
url = {https://doi.org/10.1145/3366423.3380148},
doi = {10.1145/3366423.3380148},
booktitle = {Proceedings of The Web Conference 2020},
pages = {662–672},
numpages = {11},
location = {Taipei, Taiwan},
series = {WWW '20}
}

@article{saha2021optimal,
  title={Optimal algorithms for stochastic contextual preference bandits},
  author={Saha, Aadirupa},
  journal={Advances in Neural Information Processing Systems},
  volume={34},
  pages={30050--30062},
  year={2021}
}

@book{mccullagh2019generalized,
  author    = {McCullagh, Peter and Nelder, John A.},
  title     = {Generalized Linear Models},
  edition   = {2},
  series    = {Chapman \& Hall/CRC Monographs on Statistics and Applied Probability},
  publisher = {Routledge},
  address   = {New York, NY},
  year      = {2019},
  pages     = {532},
  isbn      = {978-1-351-44584-9},
  doi       = {10.1201/9780203753736},
  url       = {https://doi.org/10.1201/9780203753736}
}

@misc{zeng2026best,
      title={Best Arm Identification in Generalized Linear Bandits via Hybrid Feedback}, 
      author={Qirun Zeng and Xuchuang Wang and Jiayi Shen and Xutong Liu and Fang Kong and Jinhang Zuo},
      year={2026},
      eprint={2605.05745},
      archivePrefix={arXiv},
}

@misc{feng2026llmroute,
      title={LLMRouter: Unified Infrastructure for Developing, Evaluating, and Deploying LLM Routers}, 
      author={Tao Feng and Fangxu Yu and Haozhen Zhang and Zhongjie Dai and Liangqi Yuan and Zijie Lei and Weizhi Zhang and Kunlun Zhu and Haodong Yue and Keyang Xuan and Ge Liu and Jiaxuan You},
      year={2026},
      eprint={2608.06867},
      archivePrefix={arXiv},
      primaryClass={cs.CL},
      url={https://arxiv.org/abs/2608.06867}, 
}

@misc{nguyen2025metallm,
      title={MetaLLM: A High-performant and Cost-efficient Dynamic Framework for Wrapping LLMs}, 
      author={Quang H. Nguyen and Thinh Dao and Duy C. Hoang and Juliette Decugis and Saurav Manchanda and Nitesh V. Chawla and Khoa D. Doan},
      year={2025},
      eprint={2407.10834},
      archivePrefix={arXiv},
}

@article{yue2012k,
  title={The k-armed dueling bandits problem},
  author={Yue, Yisong and Broder, Josef and Kleinberg, Robert and Joachims, Thorsten},
  journal={Journal of Computer and System Sciences},
  volume={78},
  number={5},
  pages={1538--1556},
  year={2012},
  publisher={Elsevier}
}

@inproceedings{wang2023efficient,
author = {Wang, Zhiyong and Liu, Xutong and Li, Shuai and Lui, John C.S.},
title = {Efficient explorative key-term selection strategies for conversational contextual bandits},
year = {2023},
isbn = {978-1-57735-880-0},
publisher = {AAAI Press},
doi = {10.1609/aaai.v37i8.26225},
booktitle = {Proceedings of the Thirty-Seventh AAAI Conference on Artificial Intelligence and Thirty-Fifth Conference on Innovative Applications of Artificial Intelligence and Thirteenth Symposium on Educational Advances in Artificial Intelligence},
articleno = {1156},
numpages = {8},
series = {AAAI'23/IAAI'23/EAAI'23}
}

@InProceedings{wang2025fusing,
  title = 	 {Fusing Reward and Dueling Feedback in Stochastic Bandits},
  author =       {Wang, Xuchuang and Zeng, Qirun and Zuo, Jinhang and Liu, Xutong and Hajiesmaili, Mohammad and Lui, John C.S. and Wierman, Adam},
  booktitle = 	 {Proceedings of the 42nd International Conference on Machine Learning},
  pages = 	 {65346--65367},
  year = 	 {2025},
  editor = 	 {Singh, Aarti and Fazel, Maryam and Hsu, Daniel and Lacoste-Julien, Simon and Berkenkamp, Felix and Maharaj, Tegan and Wagstaff, Kiri and Zhu, Jerry},
  volume = 	 {267},
  series = 	 {Proceedings of Machine Learning Research},
  month = 	 {13--19 Jul},
  publisher =    {PMLR},
  url = 	 {https://proceedings.mlr.press/v267/wang25eh.html},
}

@article{he2026learning,
  title={Learning across the gap: Hybrid multi-armed bandits with heterogeneous offline and online data},
  author={He, Qijia and Wang, Minghan and Liu, Xutong and Wang, Zhiyong and Kong, Fang},
  journal={Advances in Neural Information Processing Systems},
  volume={38},
  pages={20138--20182},
  year={2026}
}

@inproceedings{filippi2010parametric,
author = {Filippi, Sarah and Capp{\'e}, Olivier and Garivier, Aur{\'e}lien and Szepesv{\'a}ri, Csaba},
title = {Parametric bandits: the Generalized Linear case},
year = {2010},
publisher = {Curran Associates Inc.},
address = {Red Hook, NY, USA},
booktitle = {Proceedings of the 24th International Conference on Neural Information Processing Systems - Volume 1},
pages = {586–594},
numpages = {9},
location = {Vancouver, British Columbia, Canada},
series = {NIPS'10}
}

@InProceedings{russac2021self,
  title = 	 { Self-Concordant Analysis of Generalized Linear Bandits with Forgetting },
  author =       {Russac, Yoan and Faury, Louis and Capp{\'e}, Olivier and Garivier, Aur{\'e}lien},
  booktitle = 	 {Proceedings of The 24th International Conference on Artificial Intelligence and Statistics},
  pages = 	 {658--666},
  year = 	 {2021},
  editor = 	 {Banerjee, Arindam and Fukumizu, Kenji},
  volume = 	 {130},
  series = 	 {Proceedings of Machine Learning Research},
  month = 	 {13--15 Apr},
  publisher =    {PMLR},
}

@inproceedings{yue2011beat,
author = {Yue, Yisong and Joachims, Thorsten},
title = {Beat the mean bandit},
year = {2011},
isbn = {9781450306195},
publisher = {Omnipress},
address = {Madison, WI, USA},
booktitle = {Proceedings of the 28th International Conference on International Conference on Machine Learning},
pages = {241–248},
numpages = {8},
location = {Bellevue, Washington, USA},
series = {ICML'11}
}

@inproceedings{sui2018advancements,
author = {Sui, Yanan and Zoghi, Masrour and Hofmann, Katja and Yue, Yisong},
title = {Advancements in dueling bandits},
year = {2018},
isbn = {9780999241127},
publisher = {AAAI Press},
booktitle = {Proceedings of the 27th International Joint Conference on Artificial Intelligence},
pages = {5502–5510},
numpages = {9},
location = {Stockholm, Sweden},
series = {IJCAI'18}
}

@inproceedings{ong2024routellm,
 author = {Ong, Isaac and Almahairi, Amjad and Wu, Vincent and Chiang, Wei-Lin and Wu, Tianhao and Gonzalez, Joseph E and Kadous, M and Stoica, Ion},
 booktitle = {International Conference on Learning Representations},
 editor = {Y. Yue and A. Garg and N. Peng and F. Sha and R. Yu},
 pages = {34433--34448},
 title = {RouteLLM: Learning to Route LLMs from Preference Data},
 url = {https://proceedings.iclr.cc/paper_files/paper/2025/file/5503a7c69d48a2f86fc00b3dc09de686-Paper-Conference.pdf},
 volume = {2025},
 year = {2025}
}

@inproceedings{poon2026online,
author = {Poon, Manhin and Dai, Xiangxiang and Liu, Xutong and Kong, Fang and Lui, John C.S. and Zuo, Jinhang},
title = {Online multi-LLM selection via contextual bandits under unstructured context evolution},
year = {2026},
isbn = {978-1-57735-906-7},
publisher = {AAAI Press},
url = {https://doi.org/10.1609/aaai.v40i29.39672},
doi = {10.1609/aaai.v40i29.39672},
booktitle = {Proceedings of the Fortieth AAAI Conference on Artificial Intelligence and Thirty-Eighth Conference on Innovative Applications of Artificial Intelligence and Sixteenth Symposium on Educational Advances in Artificial Intelligence},
articleno = {2771},
numpages = {9},
series = {AAAI'26/IAAI'26/EAAI'26}
}
